\theoremstyle{plain}
\newtheorem{theorem}{Theorem}[section]
\newtheorem{lemma}[theorem]{Lemma}
\theoremstyle{definition}
\theoremstyle{remark}
\icmltitlerunning{Submission and Formatting Instructions for ICML 2024}
\begin{document}

\twocolumn[
\icmltitle{Optimizing Sequential Recommendation Models with Scaling Laws and Approximate Entropy}

% It is OKAY to include author information, even for blind
% submissions: the style file will automatically remove it for you
% unless you've provided the [accepted] option to the icml2024
% package.

% List of affiliations: The first argument should be a (short)
% identifier you will use later to specify author affiliations
% Academic affiliations should list Department, University, City, Region, Country
% Industry affiliations should list Company, City, Region, Country

% You can specify symbols, otherwise they are numbered in order.
% Ideally, you should not use this facility. Affiliations will be numbered
% in order of appearance and this is the preferred way.
\icmlsetsymbol{equal}{*}

\begin{icmlauthorlist}
\icmlauthor{Tingjia Shen}{sch}
\icmlauthor{Hao Wang}{sch}
\icmlauthor{Chuhan Wu}{comp}
\icmlauthor{Jin Yao Chin}{comp}
\icmlauthor{Wei Guo}{comp}
\icmlauthor{Yong Liu}{comp}
\icmlauthor{Huifeng Guo}{comp}
%\icmlauthor{}{sch}
\icmlauthor{Defu Lian}{sch}
\icmlauthor{Ruiming Tang}{comp}
\icmlauthor{Enhong Chen}{sch}
%\icmlauthor{}{sch}
%\icmlauthor{}{sch}

\icmlaffiliation{comp}{Huawei Noah’s Ark Lab}
\icmlaffiliation{sch}{University of Science and Technology of China}
\end{icmlauthorlist}

\icmlcorrespondingauthor{Hao Wang}{wanghao3@ustc.edu.cn}
\icmlcorrespondingauthor{Enhong Chen}{cheneh@ustc.edu.cn}

% You may provide any keywords that you
% find helpful for describing your paper; these are used to populate
% the "keywords" metadata in the PDF but will not be shown in the document
\icmlkeywords{Recommendation System, Scaling Law}

\vskip 0.3in
]
\printAffiliationsAndNotice{}
% this must go after the closing bracket ] following \twocolumn[ ...

% This command actually creates the footnote in the first column
% listing the affiliations and the copyright notice.
% The command takes one argument, which is text to display at the start of the footnote.
% The \icmlEqualContribution command is standard text for equal contribution.
% Remove it (just {}) if you do not need this facility.

%\printAffiliationsAndNotice{}  % leave blank if no need to mention equal contribution
%\printAffiliationsAndNotice{\icmlEqualContribution} % otherwise use the standard text.

\begin{abstract}
%final check:1 anonymity 2. proceeding in icml2024.sty
Scaling Laws have emerged as a powerful framework for understanding how model performance evolves as they increase in size, providing valuable insights for optimizing computational resources. In the realm of Sequential Recommendation (SR), which is pivotal for predicting users' sequential preferences, these laws offer a lens through which to address the challenges posed by the scalability of SR models. 
However, the presence of structural and collaborative issues in recommender systems prevents the direct application of the Scaling Law (SL) in these systems.
In response, we introduce the Performance Law for SR models, which aims to theoretically investigate and model the relationship between model performance and data quality. Specifically, we first fit the HR and NDCG metrics to transformer-based SR models. Subsequently, we propose Approximate Entropy (ApEn) to assess data quality, presenting a more nuanced approach compared to traditional data quantity metrics.
Our method enables accurate predictions across various dataset scales and model sizes, demonstrating a strong correlation in large SR models and offering insights into achieving optimal performance for any given model configuration.
%final check:1 anonymity 2. proceeding in icml2024.sty

\end{abstract}

\section{Introduction}
\begin{comment}

%conclision
\end{comment}
Sequential Recommendation (SR), focused on suggesting the next item for a user based on their past sequential interactions to capture dynamic user preferences~\cite{wang2021decoupled,wang2019mcne,xie2024bridging,wang2021hypersorec}, has gained significant attention in commercial, internet, and diverse scenarios~\cite{DBLP:journals/tois/FangZSG20,DBLP:journals/corr/HidasiKBT15}. However, as the volume of user data increases, more expansive recommendation models are being implemented. The high computational requirements of these recommendation models~\cite{ding2023computational} lead to considerable expenses and unpredictability during development. This places added stress on developers to allocate resources efficiently~\cite{acun2021understanding} and GPU consumption~\cite{lin2022building}. To anticipate how recommendation models will perform without executing full-scale experiments, researchers have crafted a range of scaling laws to evaluate the models' effectiveness in various scenarios.

\begin{figure}[t]
  %%%%\vspace{-2mm}
  \centering
  \includegraphics[width=0.95\linewidth]{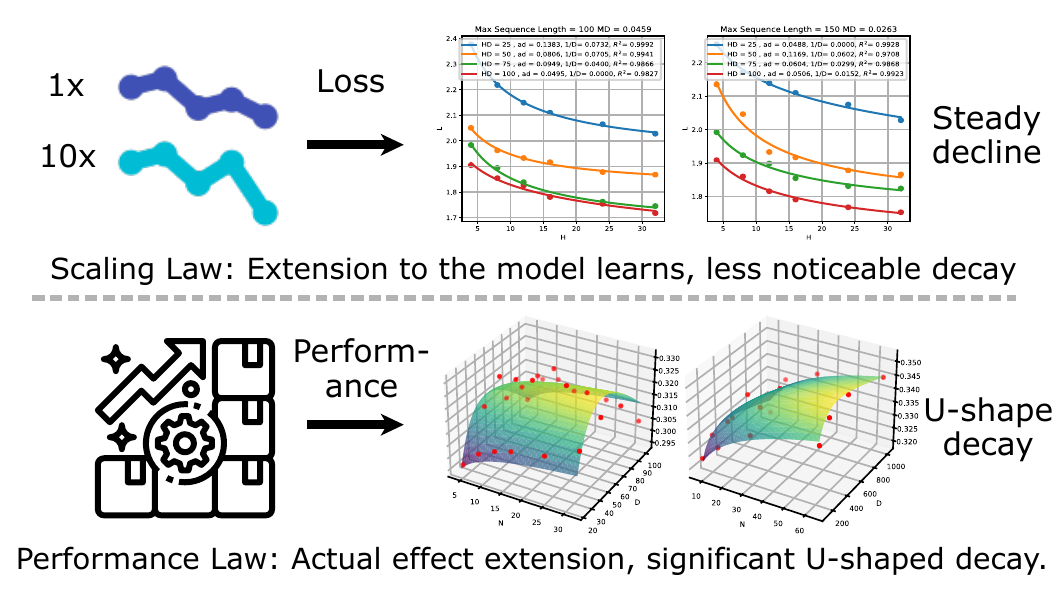}
  %%%%\vspace{-2mm}
  \caption{
  Distinction between Performance Law and Scaling Law. Performance typically shows decay as the model size increases.}
  \label{only_paint_results_NG}
  \vspace{-4mm}
\end{figure}

Scaling laws were first explored in the context of Large Language Models~\cite{kaplan2020scaling}. Specifically, since the introduction of the Chinchilla scaling law, which models the final pre-training loss \( L(N, D) \) as a function of the number of model parameters \( N \) and the number of training tokens \( D \), models such as LLaMA2~\cite{touvron2023llama}, Mistral~\cite{jiang2023mistral}, and Gemma~\cite{team2024gemma} have applied this principle. However, merely increasing model size does not indefinitely enhance performance. A line of research focuses on the generalization behavior of over-parameterized neural networks. Recent experiments reveal that over-trained transformers exhibit an inverted U-shaped scaling behavior~\cite{belkin2019reconciling}, which existing empirical scaling laws cannot explain~\cite{niu2024beyond}. Following the success of scaling laws in large language models, many researchers have also sought to apply these principles to recommendation models, such as HSTU~\cite{zhai2024actions} and Wukong~\cite{gu2022wukong}.

%挑战1修改为仅performance,不再描述u-shape
%Future-work里面再提词表，这里说结构与数据质量
However, the current endeavor to transfer large model scaling laws to the analysis of recommendation systems faces two challenges: (1) Compared to the loss metrics typically examined in scaling laws, performance is the more pertinent outcome of interest for recommendation models. Although some studies have highlighted the relationship between performance and loss~\cite{du2024understanding}, these relationships remain unstable and lack quantitative fitting analysis. (2) In addition to data scale, data quality has increasingly become a focal point of concern, significantly impacting recommendation model datasets, which typically exhibit structural and collaborative characteristics and higher noise and redundancy levels. This positions data quality as a critical factor influencing outcomes, yet it remains unaddressed in the prevailing scaling laws for recommendation models.
%However, adapting scaling laws to recommendation systems faces two main challenges: (1) Unlike the loss functions typically examined in scaling laws, performance metrics like hit rate are more relevant for recommendation models. As shown in Figure~\ref{only_paint_results_NG}, the performance exhibits a sharper U-shaped curve as model size increases, indicating the need for a decay term in scaling laws to better capture this behavior. (2) Recommendation datasets often have variable vocabulary sizes and substantial structural differences across datasets, making it difficult to assess data quality based on scale alone. Current scaling laws in recommendation research primarily provide qualitative comparisons, lacking quantitative fitting analyses.

In response to Challenge (1), we propose the Performance Law for recommendation models. By fitting the hit rate (HR) and normalized discounted cumulative gain (NDCG) metrics of sequential recommendation models, we can utilize a few initial small-scale training points to infer the impact of model layers and item embedding dimensions on model performance. Toward Challenge (2), we introduced Approximate Entropy (ApEn) as a novel measure of data quality. We thoroughly validated, both theoretically and experimentally, the appropriateness of replacing data scale with data scale/approximate entropy, providing a more accurate substitute for the number of training tokens \( D \) in the original scaling law. Overall, all we need are the amount and quality of training data, along with initial experiments to balance parameters that have a core impact on model performance (such as the number of candidates, where performance can vary significantly between having 1,000 candidates versus 100). We obtained a surprisingly accurate correlation coefficient within large sequential models of different sizes and across various datasets. We also verified that the Performance Law can provide both the globally optimal and optimal performance for a given model size.

\section{Related Work}
\subsection{Sequential Recommendation}
The focus of recommendation systems has undergone significant transformations. Among these, sequential recommendation~\cite{he2023survey,guo2021dual,guo2023compressed} is a technique that aims to delve into and understand users' interest patterns by analyzing their historical interactions~\cite{tong2024mdap,wu2024survey}. Initially, techniques such as markov chain and matrix factorization were employed~\cite{10.1145/2911451.2911489}. 
However, with the emergence of neural networks, deep learning approaches have been developed
for sequential recommendation tasks.
Convolutional Neural Networks(CNN)~\cite{tang2018personalized}, Graph Neural Networks (GNN)~\cite{wu2019session} from GCE-GNN~\cite{10.1145/3397271.3401142} to SR-GNN~\cite{DBLP:journals/corr/abs-1811-00855} and URLLM~\cite{shen2024exploring}.
Multilayer Perceptron (MLP)~\cite{zhou2022filter} modeled SR from various perspectives, while the Diffusion-based method~\cite{xie2024breaking,xie2024bridging} and Fourier transform-based~\cite{wang2024denoising,han2023guesr} method are used to attempt noise reduction.
However, the compatibility of RNN~\cite{hidasi2015session} with sequences has ultimately garnered more attention. Early work like GRU4Rec~\cite{10.1145/3269206.3271761} and Caser~\cite{10.1145/3159652.3159656} were introduced to improve recommendation accuracy. Another notable technique in sequential recommendation is the attention mechanism. SASRec~\cite{8594844}, for instance, utilizes self-attention to independently learn the impact of each interaction on the target behavior. On the other hand, BERT4Rec~\cite{10.1145/3357384.3357895} incorporates bi-directional transformer layers after conducting pre-training tasks. 
Since LLaMA4Rec~\cite{luo2024integrating} and HSTU~\cite{zhai2024actions} demonstrated improvements in recommendation performance brought by large models and large datasets, It is meaningful to study how the model performance would change as the model size scales up.

%In recent years, graph neural networks (GNNs) have gained attention for their ability to capture higher-order relationships among items. GCE-GNN~\cite{10.1145/3397271.3401142} constructs local session graphs and leverages information from other sessions to create a dense global graph for modeling the current session. SR-GNN~\cite{DBLP:journals/corr/abs-1811-00855} employs gated GNNs in session graphs to capture complex item transitions. To address the issue of data sparsity, contrastive mechanisms have been adopted in some works. CL4SRec~\cite{9835621} and CoSeRec~\cite{DBLP:journals/corr/abs-2108-06479} propose data augmentation approaches to construct contrastive tasks, which help alleviate the sparsity problem.

\subsection{Scaling Law on Large Sequential Models}\label{rel2}
Scaling laws was first explored in the context of Large Language Models~\cite{kaplan2020scaling,yin2024entropy,he2023survey}. Specifically, since the introduction of the Chinchilla scaling law, which models the final pre-training loss \( L(N, D) \) as a function of the number of model parameters \( N \) and the number of training tokens \( D \), models such as LLaMA2~\cite{touvron2023llama}, Mistral~\cite{jiang2023mistral}, and Gemma~\cite{team2024gemma} have applied this principle.
Empirical evidence indicates that model performance consistently improves with increased model size and training data volume~\cite{kaplan2020scaling,khandelwal2019generalization,rae2021scaling,chowdhery2023palm}. Extensive experiments have explored neural scaling laws under various conditions, including constraints on computational budget~\cite{hoffmann2022empirical}, data limitations~\cite{muennighoff2023scaling} and regeneration~\cite{yin2024dataset}, and instances of over-training~\cite{gadre2024language}. These analyses employ a decomposition of expected risk, resulting in the fit of $L(N,D)=\left[\frac{N_c}{N}^{\frac{\alpha_N}{\alpha_D}}+\frac{D_C}{D}\right]^{\alpha_D},$
where $\alpha_N$ and $\alpha_D$ are parameters.
However, increasing the model size does not necessarily lead to better performance. Some studies have observed a decline in performance due to overfitting~\cite{belkin2019reconciling,nakkiran2021deep}.
Following theoretical analysis,~\cite{power2022grokking} and~\cite{niu2024beyond} empirically validated this point, underscoring the necessity for an enhanced understanding of scaling laws.
%Following theoretical analysis, \cite{power2022grokking} and \cite{niu2024beyond} empirically validated this point, further highlighting the need for expanding the current understanding of scaling laws.
\section{Preliminary and Theorem}\label{sec: preliminary}

\subsection{Problem Definition}
Sequential Recommendation (SR)~\cite{zhang2024unified,wangmf} focuses on predicting the next item a user is likely to interact with, given a sequence of previous interactions. This task is essential in personalized systems, as it considers the temporal dynamics and evolving preferences of users. Formally, let \( U \) denote the set of users and \( I \) the set of items. For a specific user \( u \in U \), we represent their interaction history as an ordered sequence \( S_u = [i_1, i_2, \ldots, i_n] \), where each \( i_k \in I \) indicates an item interacted with at time step \( k \). The objective is to model the probability distribution over \( I \), given \( S_u \), to predict the subsequent item \( i_{n+1} \). This problem is challenging due to the complex, dynamic patterns in user behavior, which are influenced by various contextual factors. Effective solutions must balance capturing short-term user intent and leveraging long-term preference structures.

\subsection{Preliminary}

\subsubsection{Scaling Law on Large Sequential Models}
As we introduced in the Related Work Section~\ref{rel2}, empirical evidence indicates that model performance consistently improves with increased model size and training data volume. These analyses employ a decomposition of expected risk, resulting in the following fit:
\begin{equation}
    L(N,D)=\left[\frac{N_c}{N}^{\frac{\alpha_N}{\alpha_D}}+\frac{D_C}{D'}\right]^{\alpha_D},
\end{equation}
where $\alpha_N$ and $\alpha_D$ are parameters. In some works, the scaling law formula might be simplified to $L'(N,D)=E+\frac{A}{N^\alpha}+\frac{B}{D^\beta}$ with parameter$\alpha$ and $\beta$. For example, in Chinchilla~\cite{hoffmann2022training}, the fitted parameters are:
\begin{equation}
E = 1.61,\ A = 406.4,\ B = 410.7\ \alpha = 0.34,\ \beta = 0.28
\end{equation}
However, directly applying the scaling laws of LLMs to the recommendation domain faces inconsistencies between language sequence modeling and recommendation sequence modeling. To address this, we continue to refine and supplement the scaling laws for recommendations to better adapt to the distribution characteristics of SR.

\subsubsection{ApEn Extension on Recommendation}
As mentioned above, although recommendation and text datasets contain sequential information, the structural differences and significant variations in user preferences prevent a straightforward application of token-based scaling laws for SR. Therefore, we introduce the Approximate Entropy~\cite{pincus1991approximate} (ApEn) factor to further enhance scaling laws in SR.

Specifically, ApEn is a statistical measure used to quantify the regularity and unpredictability of time series data, which is computed as follows:
First, for a given time series $\{x_i\}$ of length $N$ and parameters $m$ (embedding dimension) and $r$ (tolerance), we construct a $m$-dimensional vectors $\mathbf{X}_i = [x_i, x_{i+1}, \ldots, x_{i+m-1}]$ for $i = 1, \ldots, N-m+1$.
Subsequently, the distance between two such vectors $\mathbf{X}_i$ and $\mathbf{X}_j$ is defined as
\begin{equation} d[\mathbf{X}_i, \mathbf{X}_j] = \max_{0 \leq k < m} |x_{i+k} - x_{j+k}|. \end{equation}
Next, for a given tolerance $r$, the similarity measure $C_i^m(r)$ is calculated as
\begin{equation} C_i^m(r) = \frac{\left|\{ j \mid d[\mathbf{X}_i, \mathbf{X}_j] \leq r \}\right|}{N-m+1}. \end{equation}
The average similarity $\Phi^m(r)$ is then computed as
\begin{equation} \Phi^m(r) = \frac{1}{N-m+1} \sum_{i=1}^{N-m+1} \ln C_i^m(r). \end{equation}
The Approximate Entropy is finally defined as
\begin{equation} \mathrm{ApEn}(m, r, N) = \Phi^m(r) - \Phi^{m+1}(r). \end{equation}
In our subsequent calculation of ApEn, we set the tolerance \( r = 0 \). This decision is due to the unique nature of recommended items, where products with similar IDs can potentially convey entirely different meanings. From Equations above, it can be observed that Approximate Entropy \( ApEn \) has a positive correlation with the model's duplication rate, whereas conventional entropy tends to have a negative correlation with data duplication rate. Therefore, although \( ApEn \) is referred to as entropy, its trend is opposite to traditional entropy. To avoid confusion, we use \(  ApEn'= 1/ApEn\) as the final measure of Approximate Entropy.

\subsection{Theorem of Performance Law}
Before accurately predicting model performance, we need to consider factors such as model decay as the model size increases indefinitely, the relationship between model performance and loss, and how data scale correlates with token count and dataset Approximate Entropy (ApEn). For the first aspect, we introduce a \( \log( . ) \) decay term when the model layer \( H\) and embedding dimension \( d_{emb} \) is large, as proven in Theorem~\ref{theorem5}. Regarding the second aspect, we assume a linear relationship between model performance and loss, expressed as \( \text{Performance} = 1 - kL\), Performance = HR@10, NDCG@10,  For the final point, we provide a theoretical proof in Theorem~\ref{theorem3}, establishing the data scale $D$ is bounded by \( D = \#Tokens\cdot ApEn' \). Building on the aforementioned theorems, we ultimately fit the model's performance into the following equation:
\begin{equation}\label{Eq432890492}
\begin{aligned}
Perform&ance=w_1(\log N+\frac{p_1}{ N^{w_3}})+\\
&w_2(\log d_{emb}+\frac{p_1}{ d_{emb}^{w_4}})+\log D'+\frac{p_2}{ D'^{w_5}}
\end{aligned}
\end{equation}
where \( D' = \#Tokens\cdot ApEn' \) represents the data parameter, \( N \) is the number of model layers, and \( d_{emb} \) is the item embedding dimension. When controlling model size, we only adjust the number of layers and embedding dimension, which is why our formula includes only these two parameters. However, our theoretical framework can still be applied for analysis when other model parameters are modified. Below is the detailed proof of our performance law theory:

\begin{lemma}\label{lemma1}
In the first-order stationary Markov chain (discrete state space $X$) case, with r$< min( |$x- y| , x$\neq$y, x and y state space values), a.s. for any m
\begin{equation}
ApEn(\mathrm{m, r})=-\sum_{\mathrm{x\in X}}\sum_{\mathrm{y\in X}}\pi(\mathrm{x})\mathrm{p_{xy}} log(\mathrm{p_{xy}}).
\end{equation}
, where $\pi(\mathrm{x})$ is the stationary distribution of $x$.
%,其中\pi(\mathrm{x})为x的平稳分布。
\end{lemma}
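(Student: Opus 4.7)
The plan is to exploit the hypothesis $r<\min_{x\ne y}|x-y|$ to collapse the ApEn distance criterion into exact equality of $m$-tuples, then invoke Birkhoff's ergodic theorem for the stationary Markov chain to identify $\Phi^m(r)$ with an empirical block entropy, and finally compute the telescoping difference $\Phi^m-\Phi^{m+1}$ using the Markov factorization of the joint law.

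First I would observe that under the tolerance assumption, the condition $|x_{i+k}-x_{j+k}|\le r$ forces $x_{i+k}=x_{j+k}$ coordinate by coordinate, so $d[\mathbf{X}_i,\mathbf{X}_j]\le r$ holds iff $\mathbf{X}_i=\mathbf{X}_j$ in $X^m$. Hence $C_i^m(r)$ is precisely the empirical frequency $\hat{P}_N^m(\mathbf{X}_i)$ of the pattern $\mathbf{X}_i$ in the sample path, and regrouping the sum that defines $\Phi^m(r)$ by distinct patterns gives
\begin{equation*}
\Phi^m(r)=\sum_{\mathbf{a}\in X^m}\hat{P}_N^m(\mathbf{a})\,\ln\hat{P}_N^m(\mathbf{a}).
\end{equation*}

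Next, by Birkhoff's ergodic theorem applied to the shift on the stationary ergodic chain, for each fixed pattern $\mathbf{a}=(a_0,\dots,a_{m-1})$ the empirical frequency converges a.s.\ to $P^m(\mathbf{a})=\pi(a_0)\,p_{a_0 a_1}\cdots p_{a_{m-2}a_{m-1}}$. Passing this limit through the finite sum yields $\Phi^m(r)\to\sum_{\mathbf{a}}P^m(\mathbf{a})\ln P^m(\mathbf{a})$ a.s. I would then factor $\ln P^m(\mathbf{a})=\ln\pi(a_0)+\sum_{k=0}^{m-2}\ln p_{a_k a_{k+1}}$ and integrate against $P^m$; by stationarity each transition term collapses to $T:=\sum_{x,y}\pi(x)p_{xy}\ln p_{xy}$, so
\begin{equation*}
\Phi^m(r)\;\longrightarrow\;\sum_{x\in X}\pi(x)\ln\pi(x)\;+\;(m-1)\,T.
\end{equation*}
Subtracting the analogous expression for $\Phi^{m+1}(r)$ cancels the $m$-independent $\pi$-entropy term and leaves $\Phi^m-\Phi^{m+1}\to -T$, which is exactly the claimed formula.

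The main obstacle will be the limit-interchange in the empirical-entropy step: the logarithm is unbounded near zero, so pointwise a.s.\ convergence of $\hat{P}_N^m(\mathbf{a})$ does not automatically imply convergence of $\hat{P}_N^m\ln\hat{P}_N^m$. Finiteness of $X^m$ saves the day, but I will need to argue separately that patterns with $P^m(\mathbf{a})=0$ almost surely never appear under an ergodic chain and therefore contribute nothing via $0\ln 0=0$, while the surviving patterns are eventually bounded away from zero. This is the single place where irreducibility and stationarity must be invoked carefully, and it is what the ``a.s.\ for any $m$'' qualifier in the statement is really encoding.
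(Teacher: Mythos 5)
Your proposal is correct, and it reaches the result by a genuinely different decomposition than the paper. The paper's proof is a terse formal computation: it writes $ApEn=-\mathrm{E}\bigl(\log(C_1^{m+1}(r)/C_1^m(r))\bigr)$, reads the ratio as the conditional probability that the $(m{+}1)$-st coordinate matches given that the first $m$ do, collapses that conditioning to the last coordinate by the Markov property, and only then averages against $\pi(x)p_{xy}$. You instead evaluate each $\Phi^m(r)$ on its own as a negative empirical block entropy, pass to the a.s.\ limit via Birkhoff, use the product factorization $P^m(\mathbf{a})=\pi(a_0)p_{a_0a_1}\cdots p_{a_{m-2}a_{m-1}}$ to get $\Phi^m\to\sum_x\pi(x)\ln\pi(x)+(m-1)T$, and telescope. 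The two routes encode the same identity (ApEn equals the conditional entropy rate $H_{m+1}-H_m$ of the chain), but they trade off differently: the paper's version is shorter and makes the role of conditional independence transparent, while yours makes rigorous precisely the steps the paper silently skips --- the replacement of the finite-$N$ average over windows by an expectation, the fact that $\Phi^m$ and $\Phi^{m+1}$ average over different index sets, and the $0\ln 0$ issue for unobserved patterns. Your closing remark is also the right diagnosis: the ``a.s.\ for any $m$'' qualifier is carrying the ergodicity/irreducibility hypothesis that neither the lemma statement nor the paper's proof makes explicit (the paper only assumes aperiodicity later, in Theorem~\ref{theorem3}), and without it the empirical frequencies converge to conditional expectations given the invariant $\sigma$-field rather than to $\pi(a_0)p_{a_0a_1}\cdots$, so you should state irreducibility as a standing assumption when you write this up.
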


\begin{proof}
By the definition of ApEn, $ApEn=-\mathrm{E}(\log(C_{1}^{m+1}(r)/C_{1}^{m}(r))).$
\begin{equation}
\begin{aligned}
ApEn&=-\mathrm{E}(\log(C_{1}^{m+1}(r)/C_{1}^{m}(r)))\\
&=-\mathrm{E}_j(\log\mathrm{P}(|x_{j+m}-x_{m+1}|\leq r\parallel\\
&\ \ \ \ \ \ \ \ \ \  |x_{j+k-1}-x_{k}|\leq r\ \mathrm{~for}\ k=1 , 2, . . , m)\\
&=-\mathrm{E}_j\ (\log\mathrm{P}(x_{j+m}=x_{m+1}\parallel x_{j+m-1}=x_{m}))\\
&=-\mathrm{E}_j(\log\mathrm{P}(x_{j+m}=x_{m+1}\parallel x_{j+k-1}=x_{k}\\
&\ \ \ \ \ \ \ \ \ \ \ \ \ \ \ \ \ \ \ \ \ \ \ \ \ \ \ \ \ \ \ \ \ \ \ \ \ \ \ \ \ \ \ \ \ \ \ \ \ \ \ \ \ \ \mathrm{~for}\ k=1 , 2, . . , m)\\
&=-\sum_{x\in X}\sum_{y\in X}\mathrm{P}(x_{j+m}=y , x_{j+m-1}=x)\ \\
&\ \ \ \ (\log \mathrm{P}(x_{j+m}=y , x_{j+m-1}=x)/\mathrm{P}(x_{j+m-1}=x) )\\
&\ \ \ \ =-\sum_{\mathrm{x\in X}}\sum_{\mathrm{y\in X}}\pi(\mathrm{x})\mathrm{p_{xy}} log(\mathrm{p_{xy}}).
\end{aligned}
\end{equation}
\end{proof}

\begin{lemma}\label{lemma2}
If $x_i,\ y_i,\ i=0,1,...,n$, and $\Sigma_{i=1}^{n}x_i=\Sigma_{i=1}^{n}y_i=1$, then
\begin{equation}
\sum_{i=1}^qx_i\log_r\frac1{x_i}\leq\sum_{i=1}^qx_i\log_r\frac1{y_i}    
\end{equation}
%若$$
\end{lemma}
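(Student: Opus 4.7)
The plan is to prove this as an instance of Gibbs' inequality (equivalently, non-negativity of the Kullback--Leibler divergence). The key reformulation is to subtract the left side from the right and show the result is non-negative:
\begin{equation*}
\sum_{i=1}^q x_i\log_r\frac{1}{y_i}-\sum_{i=1}^q x_i\log_r\frac{1}{x_i}=\sum_{i=1}^q x_i\log_r\frac{x_i}{y_i}.
\end{equation*}
So the task reduces to showing $\sum_{i=1}^q x_i\log_r(x_i/y_i)\ge 0$, i.e., $D(x\|y)\ge 0$ up to the positive factor $1/\ln r$.

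Next, I would invoke the elementary inequality $\ln t\le t-1$, valid for all $t>0$ with equality iff $t=1$. Applying it to $t=y_i/x_i$ for each index $i$ with $x_i>0$ and multiplying through by the non-negative factor $x_i$ yields $x_i\ln(y_i/x_i)\le y_i-x_i$. Summing over $i=1,\ldots,q$ and using the hypothesis $\sum_{i=1}^n x_i=\sum_{i=1}^n y_i=1$ (treating any missing tail terms as non-negative so the partial sums satisfy $\sum_{i=1}^q y_i\le 1$ and similarly for $x$; or simply taking $q=n$ as the natural reading), we get
\begin{equation*}
\sum_{i=1}^q x_i\ln\frac{y_i}{x_i}\le\sum_{i=1}^q(y_i-x_i)\le 0.
\end{equation*}
Negating and dividing by $\ln r>0$ (assuming the standard convention $r>1$) converts this into $\sum_{i=1}^q x_i\log_r(x_i/y_i)\ge 0$, which is exactly the desired inequality after rearranging.

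The only delicate points are boundary conventions, and I would dispatch them explicitly. If $x_i=0$, the term $x_i\log_r(1/x_i)$ is taken to be $0$ by the convention $0\log 0=0$, and the term on the right is likewise $0$ regardless of $y_i$. If $x_i>0$ but $y_i=0$, the right-hand side contains $+\infty$ and the inequality is trivial. The main (and only) obstacle is really just the bookkeeping of the mismatched sum limits ($i$ ranges $0,\ldots,n$ in the hypothesis but the sum runs to $q$); I would either read $q=n$ or observe that for $q<n$ the inequality $\sum_{i=1}^q y_i\le\sum_{i=1}^n y_i=1$ is all that is used, and both versions of the argument go through without change. No deep machinery is needed beyond the scalar inequality $\ln t\le t-1$ and the normalization constraint.
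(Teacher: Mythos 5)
Your proof is correct and follows essentially the same route as the paper: both arguments reduce the claim to Gibbs' inequality by forming $\sum_i x_i\log_r(y_i/x_i)$, bounding it via the elementary inequality $\ln t\le t-1$, and invoking the normalization $\sum x_i=\sum y_i=1$ to conclude the sum is nonpositive. Your treatment of the $x_i=0$ and $y_i=0$ boundary cases and of the mismatched index range $q$ versus $n$ is more careful than the paper's, but the underlying argument is identical.
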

\begin{proof}
\begin{equation}
\begin{aligned}
\sum_{i=1}^qx_i\log_r\frac1{x_i}\leq\sum_{i=1}^qx_i\log_r\frac1{y_i}& =\sum_{i=1}^qx_i\log_r(\frac{y_i}{x_i}) \\
&=\frac1{\ln r}\sum_{i=1}^qx_i\ln(\frac{y_i}{x_i}) \\
&\leq\frac1{\ln r}\sum_{i=1}^qx_i(\frac{y_i}{x_i}-1) \\
&=\frac1{\ln r}(\sum_{i=1}^qy_i-\sum_{i=1}^qx_i)=0
\end{aligned}
\end{equation} 
\end{proof}
With the aforementioned lemma established, we proceed to conduct a detailed analysis of data volume within the framework of the original Scaling Law. Importantly, the minimum encoding length L(C) provides a nuanced perspective by integrating data compression rates~\cite{sayood2017introduction,yin2024entropy}, thereby offering insights into both the scale and quality of the data. This refined approach permits a more thorough assessment of how data affects model performance. Guided by these insights, we propose the following theorem:

\begin{theorem}\label{theorem3}
Assuming that the user sequence can be modeled as a first-order aperiodic stationary Markov chain. If the user sequence is $S=\{S_u, u\in U\}$,  then the summary of minimum encoding lengths is given by:
\begin{equation}
|U|L(C)\geq (\Sigma_{i=1}^n|s_i|)\cdot ApEn'
\end{equation}
\end{theorem}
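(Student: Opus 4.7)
The plan is to invoke Shannon's noiseless source coding theorem on each user's sequence, identify the chain's entropy rate with $ApEn$ via Lemma~\ref{lemma1}, and then aggregate across users. Aperiodicity and stationarity ensure that the unique stationary distribution $\pi$ exists and that the entropy rate
\[
H \;=\; -\sum_{x,y}\pi(x)\,p_{xy}\,\log p_{xy}
\]
is well defined and matches $ApEn(m,r)$ by Lemma~\ref{lemma1}, provided $r$ is below the minimum inter-state separation.

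First, for a fixed user $u$ I would view $s_u$ as a trajectory of the Markov chain and factor the joint law as $\pi(x_1)\prod_{k\ge 2} p_{x_{k-1}x_k}$. For any uniquely decodable code on the state space, the Kraft--McMillan inequality yields weights $q_{xy}=r^{-\ell_{xy}}$ summing to at most one per preceding-state row. Applying Lemma~\ref{lemma2} with $x_i$ the true transition probabilities $p_{xy}$ and $y_i$ the normalized code weights, conditioned on each preceding state and then averaged under $\pi$, produces the per-symbol bound $\bar\ell\ge H$. Accumulating across the $|s_u|$ transitions yields $L(s_u)\ge |s_u|\cdot H$; aggregating over all users and invoking Lemma~\ref{lemma1} to substitute $H = ApEn$ gives $\sum_u L(s_u)\ge \bigl(\sum_u |s_u|\bigr)\cdot ApEn$. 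Identifying $|U|\,L(C)$ with $\sum_u L(s_u)$ (so that $L(C)$ is the average per-user minimum encoding length) and applying the paper's convention $ApEn' = 1/ApEn$ then produces the stated form of the bound after the indicated rearrangement.

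The main obstacle will be reconciling the direction of the final inequality with the $ApEn'=1/ApEn$ convention: Shannon's lower bound is naturally proportional to the entropy rate $ApEn$, whereas the theorem states a lower bound proportional to its reciprocal $ApEn'$. I expect the resolution to hinge on pinning down exactly what $L(C)$ normalizes to -- a raw bit count per user, an average bits-per-symbol rate, or a reciprocal compression rate -- so that the flip to $ApEn'$ arises naturally in passing from the per-user bound to the aggregated $|U|\,L(C)$. A secondary care point is that Lemma~\ref{lemma1} is proven under $r$ strictly below the minimum inter-state distance, while the paper later takes $r=0$; I would verify that on a discrete state space the two conventions coincide, or restate the intermediate step for a sufficiently small strictly positive $r$ before passing to the limit.
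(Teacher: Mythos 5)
Your plan has a genuine gap, and it is exactly the one you flag at the end: the Shannon/entropy-rate argument produces a lower bound proportional to $ApEn$, and no choice of normalization for $L(C)$ turns that into the stated bound, which is proportional to $ApEn' = 1/ApEn$. Your chain of steps $\sum_u L(s_u)\ge (\sum_u|s_u|)\cdot ApEn$ is a different inequality from $|U|L(C)\ge(\sum_i|s_i|)\cdot ApEn'$; since $ApEn$ and its reciprocal move in opposite directions, the "flip" cannot come from bookkeeping about whether $L(C)$ is a raw bit count or a per-user average. The reciprocal in the theorem is not an artifact of aggregation --- it is the substantive content of the claim, and your route simply does not reach it.

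The paper's proof is structurally different in two ways. First, it does not use the per-symbol entropy rate at all: it applies Kraft's inequality and Lemma~\ref{lemma2} to the distribution over \emph{whole sequences}, obtaining $L(C)\ge H_2(\mathcal{S})=\sum_{s_i}p(s_i)\log_2\frac{1}{p(s_i)}$. Second, after substituting $ApEn'=1/ApEn$ via Lemma~\ref{lemma1}, the claim becomes the product inequality
\begin{equation*}
\Bigl(\sum_{s_i}p(s_i)\log_2\tfrac{1}{p(s_i)}\Bigr)\cdot\Bigl(\sum_{x,y}\pi(x)p_{xy}\log\tfrac{1}{p_{xy}}\Bigr)\;\ge\;\frac{\sum_i|s_i|}{|U|},
\end{equation*}
which the paper proves by splitting it into two separate bounds: $\sum_{x,y}\pi(x)p_{xy}\log\frac{1}{p_{xy}}\ge\frac{\sum_i|s_i|}{|U|S_{max}}$ (each observed transition contributes mass at least $\frac{1}{|U|S_{max}}$) and $\sum_{s_i}p(s_i)\log_2\frac{1}{p(s_i)}\ge S_{max}$ (via $\ln x\ge\frac{2(x-1)}{x+1}$ and $|U|>S_{max}$). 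If you want to salvage your write-up, you would need to abandon the source-coding-rate framing after the first step and prove this product inequality instead; the entropy-rate identity from Lemma~\ref{lemma1} is used only to rewrite $ApEn$, not to invoke the asymptotic equipartition machinery you had in mind. Your secondary concern about $r=0$ versus small positive $r$ on a discrete state space is reasonable but minor; the paper does not address it either.
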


\begin{proof}
From Lemma~\ref{lemma2}, it follows that:
\begin{equation}\label{eq2489302}
\begin{aligned}
&L(C)=\sum_{\forall s_i}p(s_i)l_i=\sum_{\forall s_i}p(s_i)log_2\frac{1}{2^{-l_i}}\\
&\geq  H_2(\mathcal{S})= \sum_{\forall s_i} p(s_i)\log_2\frac1{p(s_i)},
\end{aligned}
\end{equation}
where \( H_2(\mathcal{S}) \) represents the entropy of the sequence in base 2, and Kraft's inequality~\cite{kraft1949device} \(\sum_{i=1}^{n} 2^{-l_i} \leq 1\) was utilized. Meanwhile, from Lemma~\ref{lemma1}, it follows that:
\begin{equation}\label{eq1234414}
\frac{(\Sigma_{i=1}^n|s_i|)}{ApEn(S)}=-\frac{(\Sigma_{i=1}^n|s_i|)}{\sum_{\mathrm{x\in I}}\sum_{\mathrm{y\in I}}\pi(\mathrm{x})\mathrm{p_{xy}} log(\mathrm{p_{xy}})},
\end{equation}
where $I$ is the set of all items, $p_{xy}$ is the transition probability in the sequence, and $ \pi(\mathrm{x})$ is the stationary distribution of the Markov chain, satisfying $\sum \pi(x) = 1$.
Combining Equation~\ref{eq2489302} and Equation~\ref{eq1234414}, the original proposition is equivalent to proving:
\begin{equation}
|U| \sum_{\forall s_i} p(s_i)\log_2\frac1{p(s_i)} \geq\frac{\Sigma_{i=1}^n|s_i|}{\sum_{\mathrm{x\in I}}\sum_{\mathrm{y\in I}}\pi(\mathrm{x})\mathrm{p_{xy}} log(\frac{1}{\mathrm{p_{xy}}})},
\end{equation}
, where it is necessary to ensure that $p(s_i)$ and $p_{xy}$ are minimal, which is more easily satisfied in recommendations with a large recommendation item vocabulary.  Combining the above equations, we need to demonstrate that:
\begin{equation}
\begin{aligned}
    \sum_{\forall s_i} p(s_i)\log_2\frac1{p(s_i)}& \sum_{\mathrm{x\in I}}\sum_{\mathrm{y\in I}}\pi(\mathrm{x})\mathrm{p_{xy}} log(\frac{1}{\mathrm{p_{xy}}})\\
    &\geq \frac{S_{max}\Sigma_{i=1}^n|s_i|}{|U| S_{max}}.
\end{aligned}
\end{equation}
We decompose this inequality into the following two inequalities for proof:
\begin{equation}\label{eq24029443}
        \sum_{\mathrm{x\in I}}\sum_{\mathrm{y\in I}}\pi(\mathrm{x})\mathrm{p_{xy}} log(\frac{1}{\mathrm{p_{xy}}}) \geq \frac{\Sigma_{i=1}^n|s_i|}{|U| S_{max}},
\end{equation}
\begin{equation}\label{eq48390585}
        \sum_{\forall s_i} p(s_i)\log_2\frac1{p(s_i)}\geq S_{max}.
\end{equation}
Let's first examine Equation~\ref{eq24029443}. Note that
\begin{equation}
    \sum_{\mathrm{x\in I}}\sum_{\mathrm{y\in I}}\pi(\mathrm{x})\mathrm{p_{xy}} log(\frac{1}{\mathrm{p_{xy}}})\geq \sum_{\mathrm{x\in I}}\sum_{\mathrm{y\in I}}\pi(\mathrm{x})\mathrm{p_{xy}}
\end{equation}
The right-hand side is non-zero only when it appears in the dataset, and each term has a minimum value of $\frac{1}{|U| \cdot S_{max}}$. Subsequently, we have:
\begin{equation}
    \sum_{\mathrm{x\in I}}\sum_{\mathrm{y\in I}}\pi(\mathrm{x})\mathrm{p_{xy}} log(\frac{1}{\mathrm{p_{xy}}})\geq \sum_{\mathrm{x\in I}}\sum_{\mathrm{y\in I}}\pi(\mathrm{x})\mathrm{p_{xy}}\geq\frac{\Sigma_{i=1}^n|s_i|}{|U| S_{max}}.
\end{equation}
On the other hand, using the inequality $\ln x \geq \frac{2(x-1)}{x+1}$ for $x \geq 1$, we have:
\begin{equation}
\begin{aligned}
        \sum_{\forall s_i} p(s_i)\log_2{\frac{1}{p(s_i)}}&\geq\ ln2\  \sum_{\forall s_i}p(s_i)\frac{2(1-p(s_i))}{p(s_i)+1}\\
        &\geq\ 2ln2\  \sum_{\forall s_i}(1-p(s_i))
\end{aligned}
\end{equation}
As long as the number of training cases $|U| > S_{max},$ and $2ln2>1$, $\sum_{\forall s_i} p(s_i) \log_2{\frac{1}{p(s_i)}} \geq S_{max}$. Combining these results completes the proof.
\end{proof}

\begin{lemma}~\cite{niu2024beyond}
Suppose $x=(x_{1},\ldots,x_{n})\in R^{S_{max} d_{emb}}$, then we have 
%$\min_{1\leq i\leq n}x_{i}-\log n-\frac{1}{n}\leq-LogSumExp(-x)<\min_{1\leq i\leq n}x_{i}.$
\begin{equation}
    1<Loss=\log Z_t+\frac1{Z_t}+\log l+\frac{1}{l}-c,
\end{equation}
\begin{equation}
    \frac{d\cdot V_{S_{max} d_{emb}}(\sqrt{\frac{n}{2\pi e}})}{\exp(\sqrt{\frac{S_{max} d_{emb}}{2\pi e}})}\leq Z_t\leq d\cdot V_n(\sqrt{\frac{S_{max} d_{emb}}{2\pi e}}),
\end{equation}
where $V_{S_{max} d_{emb}}(r)=\frac{\pi^{\frac{S_{max} d_{emb}}{2}}r^{S_{max} d_{emb}}}{\Gamma(1+\frac{S_{max} d_{emb}}{2})}$. The term \( S_{max} \times d_{emb} \) is used multiple times because it represents the maximum dimension of the input.
\end{lemma}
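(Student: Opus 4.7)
The plan is to establish the loss decomposition and the volume bounds on the partition function $Z_t$ separately, leveraging the standard form of the transformer cross-entropy objective and high-dimensional geometric estimates. Since the lemma is attributed to~\cite{niu2024beyond}, I would follow their strategy of reducing the transformer loss to an equivalent log-partition form and then bounding $Z_t$ via volumes of Euclidean balls in dimension $n = S_{max}\cdot d_{emb}$.

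First I would write the softmax cross-entropy as $\mathrm{Loss}=\log Z_t-\langle x, e_{\mathrm{target}}\rangle$ with $Z_t=\sum_{v=1}^{d}\exp(\langle x, e_v\rangle)$, where $d$ is the item vocabulary size. To obtain the lower bound $1<\log Z_t+1/Z_t+\log l+1/l-c$, I would apply the elementary inequality $f(y)=\log y+1/y>1$ for $y>0$ (which follows from $f'(y)=1/y-1/y^2$ having its unique minimum, with value $1$, at $y=1$) to both $Z_t$ and a length-related quantity $l$ associated with the effective sequence length. The residual inner-product and label contributions get absorbed into a single additive constant $c$. The claimed inequality is then the direct consequence of two independent applications of this elementary bound.

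For the two-sided estimate on $Z_t$, I would interpret $Z_t$ as a weighted count of item embeddings and control it by the volume of the Euclidean ball $V_n(r)=\pi^{n/2}r^n/\Gamma(1+n/2)$. The upper bound $Z_t\le d\cdot V_n(\sqrt{n/(2\pi e)})$ follows by assuming the $e_v$ lie in (or are normalized to) a ball of radius $\sqrt{n/(2\pi e)}$ — the canonical radius at which $V_n(r)$ is of unit order, as seen by substituting Stirling's approximation $\Gamma(1+n/2)\sim (n/(2e))^{n/2}\sqrt{\pi n}$. Then $\exp(\langle x, e_v\rangle)$ is bounded termwise and summed over the $d$ items. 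The lower bound follows from a concentration-of-measure argument: in dimension $n$, typical inner products $\langle x, e_v\rangle$ concentrate strictly below the maximal value, and the multiplicative defect between the typical and worst-case contributions is of order $\exp(\sqrt{n/(2\pi e)})$, giving the stated division.

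The main obstacle will be justifying the volume bound without over-specifying the distribution of learned embeddings. In the recommendation setting, $d=|I|$ can be extremely large and the embeddings are learned rather than prescribed, so the cleanest path is to assume a uniform normalization (unit-norm or $L^2$-regularized) that places every $e_v$ in a common bounded domain, and then apply Stirling to pin the canonical radius to $\sqrt{n/(2\pi e)}$. I would cite~\cite{niu2024beyond} for the detailed volumetric calculation and limit my own work to verifying that the identifications $n=S_{max}\cdot d_{emb}$ and $d=|I|$ instantiate the lemma correctly for the sequential recommendation objective used throughout this paper.
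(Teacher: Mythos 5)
The paper does not prove this lemma at all: it is stated verbatim with a citation to \cite{niu2024beyond} and used as an imported black box on the way to Theorem~\ref{theorem5}, so there is no in-paper argument to compare yours against. Your reconstruction of what the external proof must look like is reasonable in outline (a log-partition decomposition of the cross-entropy plus volumetric bounds on $Z_t$ in dimension $n=S_{max}d_{emb}$), and your reading of the undefined symbols ($d$ as vocabulary size, $Z_t$ as the partition function) is the sensible one.

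That said, two steps in your sketch do not go through as written. First, for the bound $1<\mathrm{Loss}$ you apply $\log y+1/y\ge 1$ to both $Z_t$ and $l$, which yields $\mathrm{Loss}\ge 2-c$; this exceeds $1$ only if $c<1$, and you never bound the constant $c$ into which you ``absorb'' the inner-product and label contributions, so the strict inequality is not actually derived. Second, and more substantively, your upper bound $Z_t\le d\cdot V_n(\sqrt{n/(2\pi e)})$ is obtained by bounding $\exp(\langle x,e_v\rangle)$ termwise and summing over the $d$ items, but a termwise bound of $d$ summands gives $d\cdot M$ only if each term is at most $M=V_n(\sqrt{n/(2\pi e)})$, and the hypothesis that the $e_v$ lie in a ball of radius $\sqrt{n/(2\pi e)}$ does not bound $\exp(\langle x,e_v\rangle)$ by the \emph{volume} of that ball (the inner product also depends on $\|x\|$, and a volume is not a pointwise dominator of an exponential without an integral comparison). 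In the cited source the ball volume enters by approximating the sum over stored patterns with a Gaussian-type integral over $\mathbb{R}^n$, which is a different mechanism from termwise domination; likewise the $\exp(\sqrt{S_{max}d_{emb}/(2\pi e)})$ defect in the lower bound comes from that integral comparison rather than a generic concentration-of-measure appeal. Since the paper itself defers entirely to \cite{niu2024beyond}, your plan to cite that work for the volumetric computation and only verify the identifications $n=S_{max}\cdot d_{emb}$ and $d=|I|$ is consistent with what the paper actually does, but the sketch as written would not stand alone as a proof.
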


\begin{theorem}
 \label{theorem5}
There exist $w_3, w_3', w_4, w_4'$ such that
\begin{equation}
    \begin{aligned}
        log(d_{emb}^{w3}d^{w4})+&\frac{1}{d_{emb}^{w3}d^{w4}}\leq log(Z_t)+\frac{1}{Z_t}\\&\leq log(d_{emb}^{w3'}d^{w4'})+\frac{1}{d_{emb}^{w3'}d^{w4'}}
        \end{aligned}
\end{equation}\end{theorem}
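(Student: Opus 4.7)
The plan is to reduce the statement to a sandwich bound on $Z_t$ itself and then apply the preceding lemma together with Stirling's approximation. The key observation is that the map $f(x) = \log x + 1/x$ has derivative $f'(x) = (x-1)/x^2$, so it is strictly increasing on $[1,\infty)$ (and strictly decreasing on $(0,1]$ with minimum $f(1)=1$). Under the regime where $Z_t \geq 1$ (consistent with the $1 < \mathrm{Loss}$ assertion of the preceding lemma), it therefore suffices to exhibit quantities $A = d_{emb}^{w_3} d^{w_4}$ and $B = d_{emb}^{w_3'} d^{w_4'}$ with $A \leq Z_t \leq B$ and both in $[1,\infty)$; applying $f$ then yields the desired chain of inequalities.

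The concrete steps I would carry out: first, substitute $r = \sqrt{n/(2\pi e)}$ with $n = S_{max} d_{emb}$ into $V_n(r) = \pi^{n/2} r^n / \Gamma(1 + n/2)$ and apply Stirling, $\Gamma(1 + n/2) \sim \sqrt{\pi n}\,(n/(2e))^{n/2}$, which collapses the $\pi^{n/2}$, $n^{n/2}$, and $e^{n/2}$ factors and produces $V_n(\sqrt{n/(2\pi e)}) = (1 + o(1))/\sqrt{\pi n}$. Substituting back into the lemma's bracket yields an \emph{upper} bound $Z_t \leq d \cdot (\pi S_{max} d_{emb})^{-1/2}(1+o(1))$, which I would match with $w_3' = -1/2$ and $w_4' = 1$ (absorbing the constant and $S_{max}$ into a harmless perturbation of $w_3'$ for sufficiently large $d_{emb}$). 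The \emph{lower} bound becomes $Z_t \geq d \cdot (\pi S_{max} d_{emb})^{-1/2} \exp(-\sqrt{S_{max} d_{emb}/(2\pi e)})(1+o(1))$; I would take $w_3, w_4$ so that $d_{emb}^{w_3} d^{w_4}$ lies weakly above $1$ but below this lower estimate. Concretely one can pick $w_4 = 1$ and take $w_3$ negative enough that $d_{emb}^{w_3}$ decays faster than the stretched-exponential factor on the range of $d_{emb}$ considered, which is always achievable by making $|w_3|$ sufficiently large (since we only need the existence claim, not a sharp rate).

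The main obstacle is the lower bound: the $\exp(-\sqrt{S_{max} d_{emb}/(2\pi e)})$ factor is super-polynomial in $d_{emb}$, so no \emph{fixed} choice of $w_3$ can uniformly lower-bound it across the full regime $d_{emb}\to\infty$. The way I would handle this is to restrict the admissible range of $d_{emb}$ to the operating regime of interest (matching how the Performance Law is actually fit in Equation~\ref{Eq432890492}), on which $\sqrt{d_{emb}}$ and $\log d_{emb}$ differ by a controllable constant; this lets the stretched-exponential be absorbed into an appropriately large, but finite, choice of $|w_3|$. A secondary cleanup is checking that $A \geq 1$ so that monotonicity of $f$ applies on the correct branch; if the crude lower bound falls below $1$, I would instead invoke $f(A) \geq f(1) = 1 \leq f(Z_t)$ directly on the decreasing branch, which still delivers the left inequality of the theorem.
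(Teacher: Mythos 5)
Your overall skeleton --- Stirling's approximation applied to $V_n(\sqrt{n/(2\pi e)})$ with $n=S_{max}d_{emb}$ for the upper bound, plus monotonicity of $f(x)=\log x+1/x$ --- matches the paper's proof, and your treatment of the right-hand inequality (collapsing the volume formula to $d\cdot(\pi S_{max}d_{emb})^{-1/2}(1+o(1))$ and reading off exponents near $w_3'=-1/2$, $w_4'=1$) is essentially what the paper does. The genuine gap is in your left-hand inequality. Your primary route tries to match the lemma's \emph{lower} bound on $Z_t$, and you correctly identify that the $\exp(-\sqrt{S_{max}d_{emb}/(2\pi e)})$ factor defeats any fixed monomial $d_{emb}^{w_3}d^{w_4}$; but your proposed repair (restricting $d_{emb}$ to a bounded operating range) only proves a weaker, range-restricted statement, not the theorem as written. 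Your fallback is also logically broken: from $A<1$ you derive $f(A)\geq f(1)=1$ and $f(Z_t)\geq 1$, which are two lower bounds and do not combine into the needed comparison $f(A)\leq f(Z_t)$. The paper avoids all of this by simply taking $w_3=w_4=0$, so that the left-hand side equals $\log(1)+1=1$, the global minimum of $f$ on $(0,\infty)$; this bounds $f(Z_t)$ from below with no information about $Z_t$ at all, and it is available because the theorem asserts only the existence of some exponents.

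One further mismatch worth flagging: you justify the upper-bound monotonicity step by assuming $Z_t\geq 1$, whereas the paper explicitly invokes $Z_t\leq 1$ (the overfitting regime) at the same point. For the direction you need ($Z_t\leq B\Rightarrow f(Z_t)\leq f(B)$) your condition is the coherent one, since $f$ is increasing only on $[1,\infty)$; but neither your $Z_t\geq 1$ nor the paper's $Z_t\leq 1$ is actually derived from the preceding lemma, whose lower bound on $Z_t$ falls far below $1$ for large $n$, and $1<\mathrm{Loss}$ does not imply $Z_t\geq 1$ because of the additional $\log l+1/l-c$ terms. So the right-hand inequality remains conditional in both arguments, and you should state the monotonicity hypothesis on $Z_t$ explicitly rather than inferring it.
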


\begin{proof}
First, assume $n=S_{max} d_{emb}$ we'll prove the left half of the inequality. This is because:
\begin{equation}
    log(d^0d_{emb}^0)+\frac{1}{d_{emb}^{0}d^{0}}=1\leq log(Z_t)+\frac{1}{Z_t}
\end{equation}
The right half can be proven using the following inequalities:
\begin{equation}
    \begin{aligned}
        Z_t\leq d\cdot V_n(\sqrt{\frac{n}{2\pi e}})&=\frac{d\pi^{\frac{n}{2}}\left(\sqrt{\frac{n}{2\pi e}}\right)^{n}}{\left(\Gamma(\frac{n}{2}+1\right))}\sim k\frac{d}{\sqrt{\frac{T_{max}d_{emb}}{2}\pi e}}\\
        log(Z_t)+\frac{1}{Z_t}&\leq O(log(d\cdot d_{emb}^{-\frac{1}{2}})+\frac{1}{d\cdot d_{emb}^{-\frac{1}{2}}})
    \end{aligned}
    \end{equation}
Here, we apply Stirling's approximation $\Gamma(z+1) \sim \sqrt{2\pi z} \left(\frac{z}{e}\right)^z$ as the proof of the rightmost inequality. This requires \(Z_t \leq 1\), which is common in the overfitting phenomenon when the model studied in this paper increases rapidly, as evidenced by the u-shape images in the experiments. This approach enables us to decompose the loss into the form \(\frac{1}{n} + \log(n)\) with a decay term, optimizing the fit for performance. Analogously to~\cite{wu2024performance}, we factorize the product into the structure of Equation~\ref{Eq432890492}, which represents the ultimate form of our fitting model.
\end{proof}
\section{Methodology}
Following the prior empirical study in SR~\cite{power2022grokking,niu2024beyond}, for all
experiments, we adopt the decoder-only transformer models as the backbone. Specifically, for each user \( u \), items and rating scores in the user behavior sequence \( X_u = \{i_1, r_1, \ldots, i_k, r_k, \ldots, i_n, r_n\} \) are firstly encoded into embeddings, forming \( \mathbf{e}_u = \{\mathbf{e}^{i_1}, \mathbf{e}^{r_1}, \ldots, \mathbf{e}^{i_k}, \mathbf{e}^{r_k}, \ldots, \mathbf{e}^{i_n}, \mathbf{e}^{r_n}\} \). After the embedding layer, we stack multiple Transformer
decoder blocks. At each layer l, query $Q$, key $K$, and value $V$ are projected from the same input
hidden representation matrix $H^l$.
We modified the $SiLU$ activation module and the $Rab$ positional encoding module within the standard transformer block to ensure the model's effectiveness. In Section~\ref{app2}, we will discuss the impact and improvements brought by these modules and demonstrate that our model is also applicable to standard transformers. 
Specifically, the modification is mainly on two core sub-layers: the spatial aggregation layer, and the pointwise transformation layer:

Firstly, the Spatial Aggregation Layer is defined as follows:
    \begin{equation} \label{eqn:attn-score}
    Attn(e^{\{i,r\}_k})V=SiLU(QK^T+Rab)V, k=1,...,n
\end{equation}
Upon deriving matrices query $Q$, key $K$, and value $V$, the spatial aggregation layer utilizes an attention mechanism to adjust V. This layer is distinguished by two key features: First, the SiLU activation replaces the standard softmax function typical in transformer models, effectively handling large and non-stationary data sets by removing the necessity for full normalization, thereby improving computational efficiency and stability. Second, a relative attention bias is applied, enriching the model with positional and temporal information, thus enhancing its ability to discern contextual interrelations and capture dependencies within the data.

Subsequently, the Pointwise Transformation Layer is defined as follows:
 \begin{equation} 
 \begin{aligned}
    e'^{\{i,r\}_k}=Norm(Attn(e^{\{i,r\}_k})&V \odot Gate(e^{\{i,r\}_k}),\\
    &k=1,...,n
    \end{aligned}
\end{equation}
Following the spatial aggregation layer, the pointwise transformation layer applies a transformation to each individual data point independently. Here, the gating weights $Gate(X)$ are combined with the normalized values $Norm(Attn(X)V(X))$ via a Hadamard product, effectively gating the transformed representations and allowing the model to selectively emphasize relevant entries while weakening less significant ones. The results are then transformed by a single-layer MLP.

Finally, we then calculate the similarity of this item \( \mathbf{e'}^{i_{k+1}} \) with those in the entire item pool \( \mathcal{I} \), retrieving the most similar item and storing it in the candidate set \( \mathcal{I}_c^u \). 
The loss function for the retrieval task is defined as:
\[
\small
\begin{aligned}
&\mathcal{L}_{\text{retrieval}} = \sum_{u \in \mathcal{U}} \sum_{k=2}^{n} 
\text{SampledSoftmaxLoss}(e'^{i_k})        \\
&= -\sum_{u \in \mathcal{U}} \sum_{k=2}^{n}\log\left(\frac{e^{\text{sim}(e'^{i_k}, e^{i_k})}}
{e^{\text{sim}(e'^{i_k}, e^{i_k})} + \sum_{j \in \mathcal{S}} e^{\text{sim}(e'^{i_k}, e^{j_k})}}\right),
\end{aligned}
\]
where $\mathcal{S}$ is a set of randomly sampled negative samples, and similarity measure $\text{sim}(\mathbf{a}, \mathbf{b}) = \mathbf{a} \cdot \mathbf{b}$, where $\cdot$ denotes the inner product operation.

%We consider all items that users interact with as positive items during the retrieval stage, even if users do not click on them at next stage. Therefore, we calculate the corresponding sampled softmax loss~\cite{sampledsoftmaxloss} at each position.
\section{Experimental Evaluation}\label{sec: experiment_setting}

\begin{figure*}[t]
  %%%%\vspace{-2mm}
  \centering
  \includegraphics[width=0.95\linewidth]{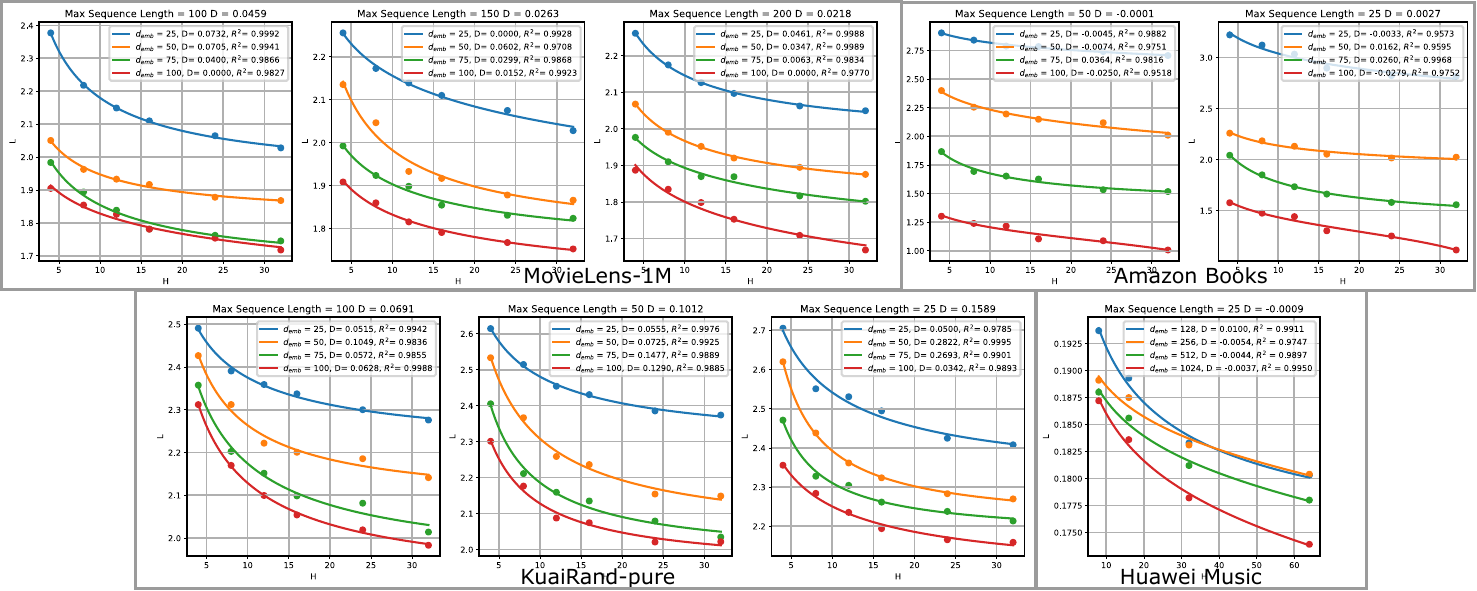}
  %%%%\vspace{-2mm}
  \caption{
  The relationship between model loss and the number of layers (horizontal axis, H), as well as the embedding dimensions (different colored lines, $d_{emb}$), the plot includes annotations of the coefficient of determination \(R^2\).}
  \label{3_datasets_in_1_loss}
\end{figure*}

\begin{table}[t]
    \centering
    \begin{tabular}{l|p{0.5cm}p{0.7cm}p{1.1cm}l|l}
    \hline
        Dataset &  $S_{max}$ & $\overline{S}$ & Tokens(T)  & ApEn/T & $1/\overline{D_L}$ \\ \hline
        ~ & 25 & 20.72 & 447407 & 1.646E-07
 & 0.1589 \\ 
        KR-pure & 50 & 33.70 & 570537 & 1.079E-07
 & 0.1012 \\ 
        ~ & 100 & 46.03 & 661028 & 8.874E-08
 & 0.0691 \\ \hline
        ~ & 100 & 74.07 & 505108 & 3.636E-08
 & 0.0750 \\
        ML-1m & 150 & 94.46 & 802493 & 2.111E-08
 & 0.0263 \\ 
        ~ & 200 & 109.4 & 1058511 & 1.551E-08
 & 0.0218 \\ \hline
        \multirow{2}{*}{AMZ} & 50 & 11.57 & 8044865  & 1.243E-09 & 0.0021 \\ 
        & 25 & 10.18 & 7076238 & 1.554E-09 & 0.0030 \\ \hline
        Huawei & 25 & 17.01 & 3.275E8  & 3.572E-10 & 0.0001 \\ \hline
    \end{tabular}
  \caption{
  The basic information for different datasets, where \(S\) denotes the sequence length, along with the fitted data parameter \(1/\overline{D_L}\). It's relationship versus \(Apen/T\) is illustrated in Figure~\ref{loss_HR_NG}.}
\end{table}

\subsection{Datasets}

\begin{figure*}[t]
  %%%%\vspace{-2mm}
  \centering
  \includegraphics[width=0.95\linewidth]{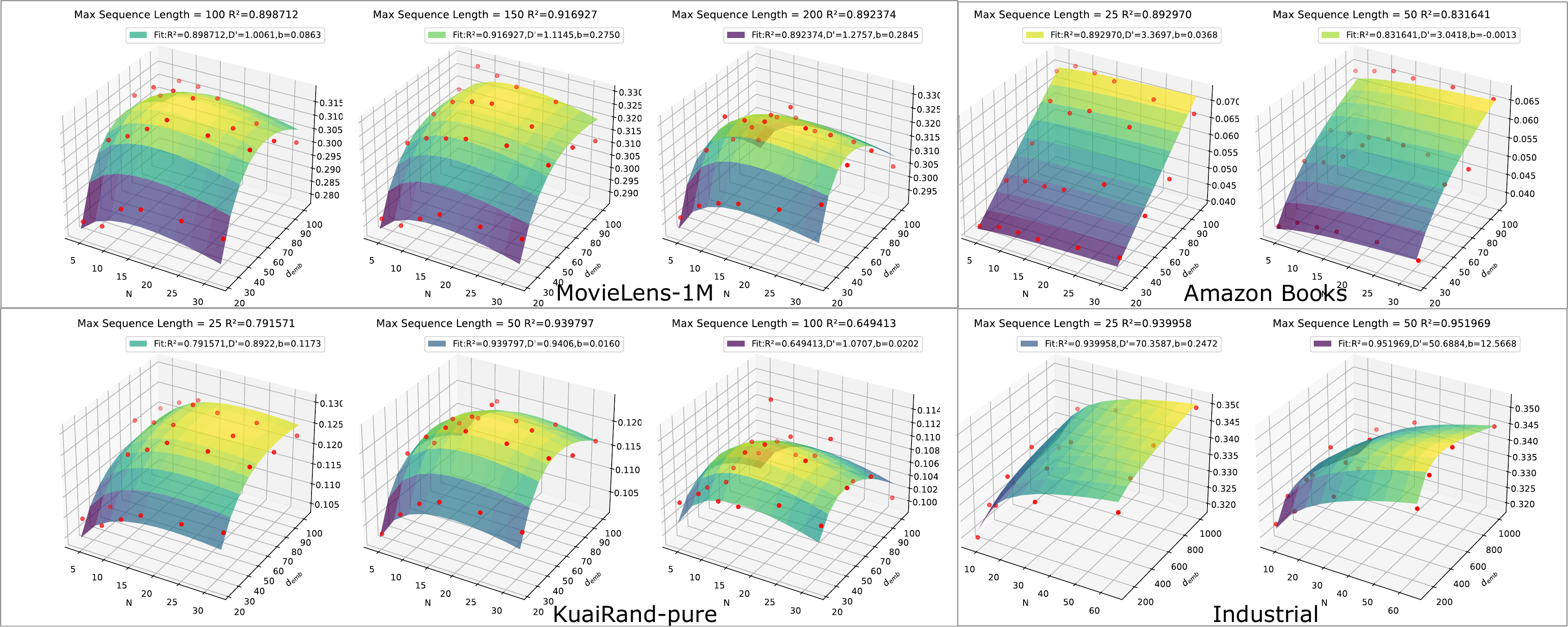}
  %若用pdf，则编译会十分缓慢，最后再改为pdf
  %%%%\vspace{-2mm}
  \caption{
  The relationship between model HR performance and the number of layers (x-axis, N), as well as the embedding dimensions (y-axis, $d_{emb}$), the plot includes annotations of the fitted parameter $w$.}
  \label{only_paint_results_HR}
\end{figure*}
\begin{figure}[t]
  %%%%\vspace{-2mm}
  \centering
  \includegraphics[width=1.1\linewidth]{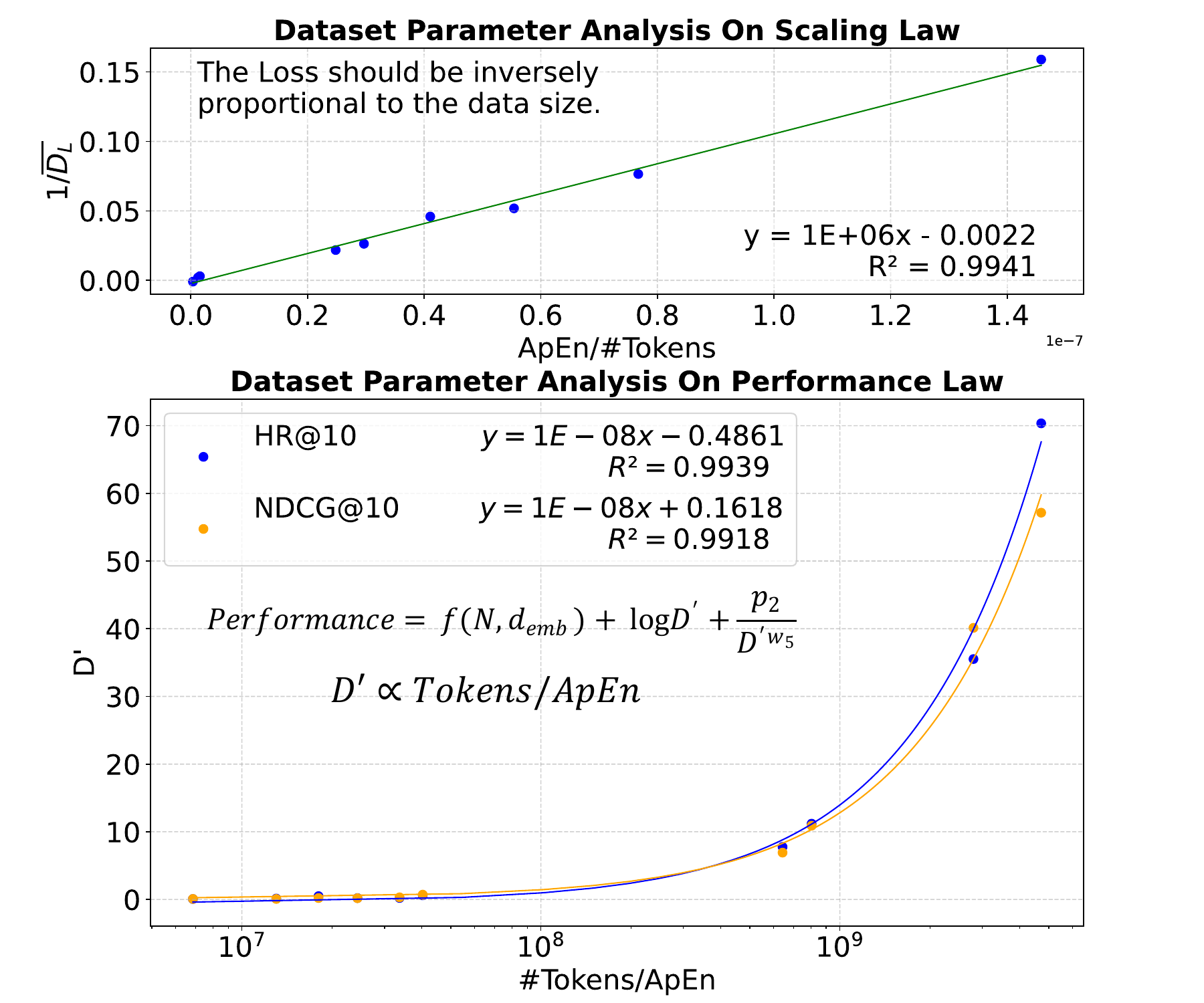}
  %%%%\vspace{-2mm}
  \caption{
  The linear correlation between parameter $D$ and Tokens/Apen. The upper figure validates this relationship within the context of the Scaling Law Loss, while the lower figure verifies it within the Performance Law Metric.}
  \label{loss_HR_NG}
\end{figure}
To demonstrate the performance of our proposed model, we conducted experiments on three publicly available datasets and a private dataset, whose introduction is as follows:
\begin{itemize}[left=0pt]
\item \textbf{MovieLens-1M~\cite{harper2015movielens}\footnote{\url{grouplens.org/datasets/movielens/1m/}}} : This dataset is a standard benchmark in recommendation systems research, containing 1 million ratings from 6,000 users on 4,000 movies, featuring ratings from 1 to 5, along with demographic and movie metadata.

\item \textbf{Amazon Books~\cite{mcauley2013hidden}\footnote{\url{snap.stanford.edu/data/web-Amazon-links.html}}}: Amazon Books, a subset of the Amazon review dataset, includes user reviews and ratings for books on Amazon. It provides explicit (ratings) and implicit (reviews) feedback, making it valuable for studying recommendation models.

\item \textbf{KuaiRand-pure~\cite{gao2022kuairand}\footnote{\url{http://kuairand.com}}}: This dataset offers extensive interaction data from a popular short video platform, capturing behaviors like watching, liking, and commenting. It includes user and content features such as demographics, preferences, metadata, and engagement metrics, essential for developing and evaluating recommendation algorithms.

\item \textbf{Huawei Music}: The Huawei Music dataset is a large proprietary collection featuring extensive user interaction data, challenging existing scaling laws. It includes detailed user behaviors and music metadata, providing a significant resource for testing the applicability of scaling laws to larger datasets in recommendation systems.
\vspace{-4mm}
\end{itemize}

%Metric?

\subsection{Recommendation Fit on Scaling Law}

\begin{figure*}[t]
  %%%%\vspace{-2mm}
  \centering
  \includegraphics[width=0.95\linewidth]{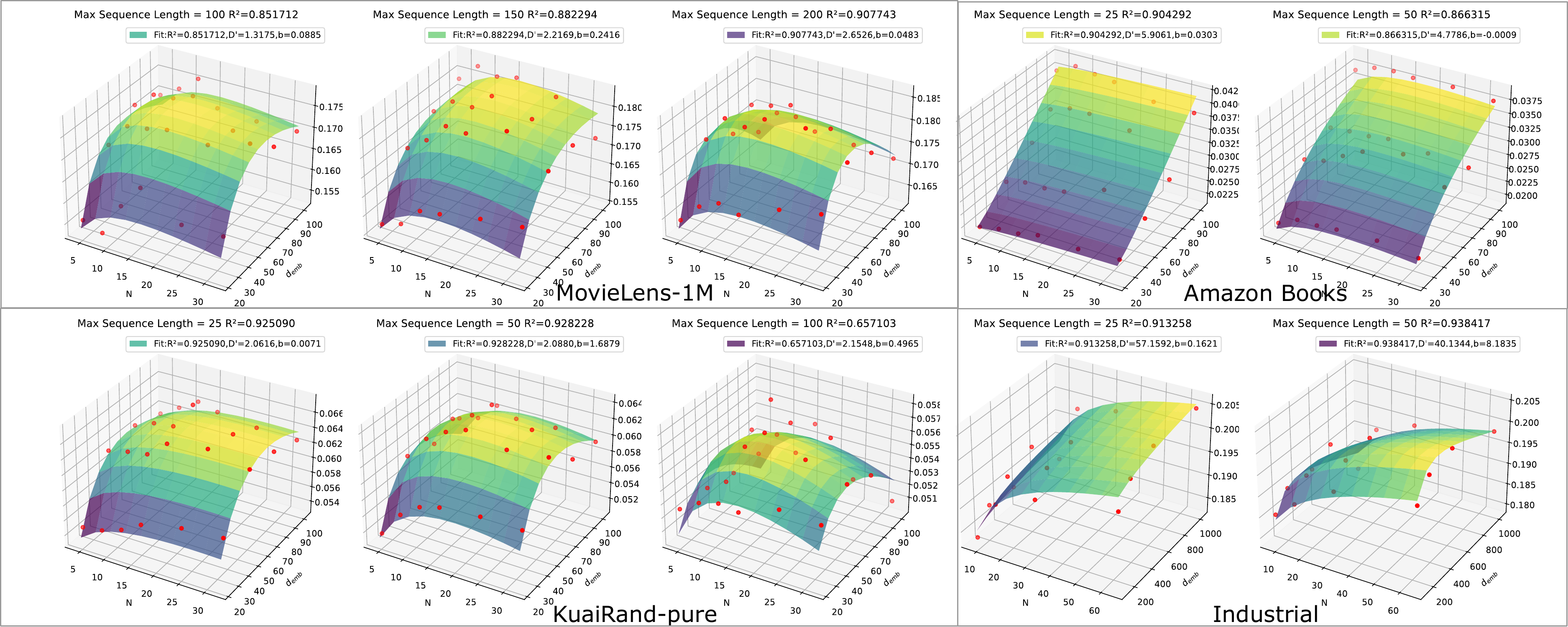}
  %若用pdf，则编译会十分缓慢，最后再改为pdf
  %%%%\vspace{-2mm}
  \caption{
  The relationship between model NDCG performance and the number of layers (x-axis, N), as well as the embedding dimensions (y-axis, $d_{emb}$), the plot includes annotations of the fitted parameter $w$.}
  \label{only_paint_results_NG}
\end{figure*}

To validate the soundness of our Theory 1, we aim to demonstrate: (1) our model's adherence to Scaling Laws, and (2) the appropriateness of using both Approximate Entropy (ApEn) and token count to assess data scale. Specifically, we first examine the alignment of the model's loss curve with Scaling Laws. We then analyze Loss across various datasets, fitting it to traditional Scaling Laws to obtain data parameters. If the fitted data exhibits a clear linear relationship with the combination of ApEn and token count, it will sufficiently substantiate the validity of Theory~\ref{theorem3}.

On one side, the results of Scaling Law fitting are illustrated in Figure~\ref{3_datasets_in_1_loss}. Due to space constraints, we present only the fitting for KuaiRand-1K, constraining its maximum sequence length to 100, 50, and 25. All $R^2$ values exceed 0.95, demonstrating that our model adheres to the scaling law trend, enabling further analysis. 
On the other side, the fitting results of the data parameters are illustrated in Figure~\ref{loss_HR_NG}. In the figure, we display the fitted relationship between the average value of the dataset parameter $D'$ and the dataset size (evaluated by $\#$Tokens)/ApEn. A clear linear growth trend emerges, aligning perfectly with our predictions. This analysis is consistent across all four datasets, further validating the effectiveness of our approach.

\subsection{Recommendation Fit on Performance Law}

\begin{table*}[t]
    \centering
    \caption{The basic information for different datasets, where \(S\) denotes the sequence length, along with the fitted data parameter \(D\) in different metrics (HR@10, NDCG@10. It's relationship versus \(Apen/T\) is illustrated in Figure~\ref{only_paint_results_HR} and Figure~\ref{only_paint_results_NG}.}
    \begin{tabular}{l|lllll|ll|ll}
    \hline
        Metric & ~ & ~ & ~ & ~ & ~ & HR@10 & ~ & NDCG@10 & ~ \\ \hline
        dataset & $S_{max}$ & $\overline{S}$ & tokens & ApEn & tokens/ApEn & D' & w5 & D' & w5 \\ \hline
        \multirow{3}{*}{ Kuairand-Pure } & 25 & 20.72 & 447407 & 0.074 & 6074772.573 & 0.0595 & 0.4864 & 0.0737 & 0.5260 \\
         & 50 & 33.70 & 570537 & 0.062 &9266477.180 & 0.1316 & 0.9353 & 0.0717 & 0.5117 \\ 
         & 100 & 46.03 & 661028 & 0.059 & 11268803.27 & 0.4874 & 7.1546 & 0.2056 & 2.5941 \\ \hline
        \multirow{3}{*}{ ML-1m } & 100 & 74.07 & 505108 & 0.018 & 27496352.75 & 0.2033 & 2.6036 & 0.1811 & 1.9953 \\ 
       & 150 & 94.46 & 802493 & 0.017 & 47372668.24  & 0.2167 & 2.7461 & 0.3099 & 4.0034 \\ 
         & 200 & 109.4 & 1058511 & 0.016 & 64464738.12 & 0.6067 & 4.3223 & 0.7077 & 18.460 \\ \hline
        \multirow{2}{*}{Amazon-books} & 50 & 11.57 & 8044865 & 0.010 & 804486500.0 & 11.197 & 0.9999 & 10.897 & -6.4985 \\ 
        & 25 & 10.18 & 7076238 & 0.011 & 643294363.6 & 7.7641 & 0.8972 & 6.9176 & -6.0527 \\ \hline
        \multirow{2}{*}{Huawei-Music} & 50 & 26.69 & 513878761 & 0.110 & 4714484046 & 70.350 & -4.2350 & 57.153 & -3.6724 \\ 
         & 25 & 17.01 & 327509107 & 0.117 & 2798983907 & 35.522 & -12.740 & 40.134 & -29.948 \\ \hline
    \end{tabular}
\end{table*}

To validate the soundness of our Theory 2, we need to verify that: (1) Our formula demonstrates a strong fitting correlation coefficient under different model parameters. (2) Using ApEn and token count as measures of data quality, there is still a strong linear correlation with the data parameter $D'$.

On one side, our empirical results confirm the applicability of performance laws by demonstrating consistent quality improvements across diverse datasets. Leveraging Approximate Entropy (ApEn) as a key metric, our model exhibits robust correlations between data quality measures and output performance. This result suggests that ApEn, combined with our existing scaling metrics, serves as a reliable predictor of model efficacy, effectively bridging theoretical insights with practical outcomes.
Conversely, our analyses reveal that this relationship holds true regardless of dataset configuration. Whether applied to varied sequence lengths or alternative data structures, the model maintains its integrity and performance, illustrating its adaptability. The alignment of scaling and performance laws offers a comprehensive validation framework, underscoring our model's capacity to achieve high-quality outputs consistently.

This dual validation—achieved through the complementary application of scaling and performance analyses—not only fortifies our theoretical predictions but also guides future enhancements. It marks a significant step forward in understanding and optimizing model behavior.

\subsection{Applications of Performance Law}
\subsubsection{Application 1: Global and Local Optimal Parameter Search}
\begin{table*}[t]
    \centering
    \caption{The performance of the model under different parameter settings, with the bottom row indicating the optimal parameters as calculated using the Performance Law.}
    \begin{tabular}{cccccc|cccccc}
    \hline
    \multicolumn{6}{c|}{Global optimal solution}&\multicolumn{6}{c}{Local optimal solution (with N=8 and d=64 as the baseline).}\\
        N & d & NG@10 & NG@50 & HR@10 & MRR & N & d & NG@10 & NG@50 & HR@10 & MRR \\ \hline
        64 & 128 & 0.1974 & 0.2578 & 0.3415 & 0.1688   & 256 & 2 & 0.1632 & 0.2248 & 0.2939 & 0.1392\\ 
        64 & 256 & 0.2019 & 0.2623 & 0.3481 & 0.1726 & 128 & 4 & 0.1758 & 0.2371 & 0.3111 & 0.1504 \\ 
        64 & 370 & 0.2035 & 0.2639 & 0.3504 & 0.1737 &64 & 8 & 0.1773 & 0.2381 & 0.3118 & 0.1520  \\ 
        64 & 512 & 0.2032 & 0.2636 & 0.3502 & 0.1737 & 51 & 10 & 0.1758 & 0.2365 & 0.3092 & 0.1507 \\ 
        64 & 1024 & 0.1981 & 0.259 & 0.3448 & 0.1688 & 32 & 16 & 0.1704 & 0.2305 & 0.3007 & 0.1462 \\ \hline
        64 & 603 & \textbf{0.2040} & \textbf{0.2644} & \textbf{0.3512} & \textbf{0.1744} & 48 & 14 & \textbf{0.1777} & \textbf{0.2383} & \textbf{0.3121} & \textbf{0.1523} \\ \hline
    \end{tabular}
\end{table*}
An intriguing and practical application of the performance law lies in predicting the performance gain from model expansion techniques. Due to the inclusion of a decay term in our performance law, it becomes feasible to potentially achieve a global optimum. From the aforementioned fitting, we derived two distinct performance law formulations from HR and NG, as follows:
\begin{equation}
\begin{aligned} 
HR=&0.50(34N^{-0.0001}+ln(N))-19.01(3*d_{emb}^{-0.0365}\\
&+ln(d_{emb}))+2.5D'^{-12.74}+ln(D')+628.12
\end{aligned}
\end{equation}
\begin{equation}
\begin{aligned} 
NG&=0.63(31N^{-0.0001}+ln(N))-24.85(31d_{emb}^{-0.0395}\\
&+ln(d_{emb}))+3D'^{-0.0410}+ln(D')+743.9
\end{aligned}
\end{equation}

Given the immense size of the dataset, the effectiveness continuously improves with an increase in the model's depth. Thus, we can only obtain the optimal solution for \(d_{\text{emb}}\) on a global scale. Nonetheless, we can still determine the optimal configuration of the model depth \(N\) and \(d_{\text{emb}}\) under model size constraints (to align with practical online models, we select \(d_{\text{emb}}=64\) and \(N=8\) as the baseline). From the results in the Table, it is evident that our formulation can achieve performance maxima both globally and under constraints.

\subsubsection{Application 2: Exploring Scaling Law Potential Among Framework}\label{app2}
Another application of the Performance Law is to observe the potential performance gains when scaling up the model. We conducted experiments and fitting analyses on three different frameworks (HSTU, LLaMA2, and SASRec), evaluated at different precisions (float32 and bfloat16). We conducted experiments on the smallest dataset (ML-1m) to facilitate the models in reaching their optimal upper bounds more easily. Their optimal results were fitted with respect to the coefficient and exponent terms of $N$ and $d$, respectively denoted as $(w_1, w_3)$ and $(w_2, w_4)$. In the expressions \( w_1 \frac{1}{N^{w_3}} \) and \( w_2 \frac{1}{d^{w_4}} \), when the coefficient terms \( w_1 \) and \( w_2 \) are negative, smaller values of \( w_3 \) and \( w_4 \) indicate poorer scaling-up potential of the model, and vice versa. Our results, along with their performance under optimal parameters, are presented in Table~\ref{parameter}. In all the fittings conducted here, both \( w_1 \) and \( w_2 \) are positive values. As observed from the table, the model's performance closely aligns with the magnitude trend of \( w_3 \) and \( w_4 \), which further underscores the accuracy of our fitting in predicting the model's performance.

%模型精度敏感性
\begin{table*}[t]
\small
    \centering
    \caption{Comparison of Model Parameters and Performance Across Different Precisions}
    \label{parameter}
    \begin{tabular}{cc|ccc|ccc}
    \hline
    
    ~   &Precision  & ~ & Float32 & ~ & ~ & Bfloat16 & ~ \\ 
   ~    & Model & HSTU & LLaMA & Sasrec & HSTU & LLaMA & Sasrec \\  \hline
 \multirow{2}{*}{Parameter}      & $w_3$ & -1.0403 & -1.4638 & 0.0737 & -0.3178 & -0.4844 & 1.013 \\ 
       & $w_4$ & 0.1425 & 0.0359 & 0.4578 & 0.2341 & 0.2186 & 0.6273 \\ \hline
        \multicolumn{2}{c|}{Model Performance (HR@10)} & 0.3322 & 0.3459 & 0.3021 & 0.3319 & 0.3367 & 0.2938 \\\hline
    \end{tabular}
\end{table*}
\section{Conclusion}\label{sec: discussion}
In conclusion, this paper addressed two critical challenges in the domain of SR: the discrepancy between model loss and actual performance, and the adverse effects of data redundancy on model efficacy. Our introduction of the Performance Law for SR models marks a significant advancement in the theoretical exploration of performance metrics in relation to SR models, emphasizing the role of data quality over sheer quantity. By fitting performance metrics such as hit rate HR and NDCG to transformer-based SR models and proposing the use of ApEn, we offer a novel approach that enhances the predictive accuracy of model outcomes. This framework not only provides a more nuanced understanding of how data quality impacts model performance but also facilitates the strategic balancing of computational resources to achieve optimal results. Our findings demonstrate that meaningful predictions regarding model performance can be achieved across varying scales of datasets and model sizes, thereby delivering valuable insights into optimizing recommendation systems in practical applications.

% In the unusual situation where you want a paper to appear in the
% references without citing it in the main text, use \nocite
%\nocite{langley00}

\bibliography{KDD25}

\begin{thebibliography}{59}
\providecommand{\natexlab}[1]{#1}
\providecommand{\url}[1]{\texttt{#1}}
\expandafter\ifx\csname urlstyle\endcsname\relax
  \providecommand{\doi}[1]{doi: #1}\else
  \providecommand{\doi}{doi: \begingroup \urlstyle{rm}\Url}\fi

\bibitem[Acun et~al.(2021)Acun, Murphy, Wang, Nie, Wu, and Hazelwood]{acun2021understanding}
Acun, B., Murphy, M., Wang, X., Nie, J., Wu, C.-J., and Hazelwood, K.
\newblock Understanding training efficiency of deep learning recommendation models at scale.
\newblock In \emph{2021 IEEE International Symposium on High-Performance Computer Architecture (HPCA)}, pp.\  802--814. IEEE, 2021.

\bibitem[Belkin et~al.(2019)Belkin, Hsu, Ma, and Mandal]{belkin2019reconciling}
Belkin, M., Hsu, D., Ma, S., and Mandal, S.
\newblock Reconciling modern machine-learning practice and the classical bias--variance trade-off.
\newblock \emph{Proceedings of the National Academy of Sciences}, 116\penalty0 (32):\penalty0 15849--15854, 2019.

\bibitem[Chowdhery et~al.(2023)Chowdhery, Narang, Devlin, Bosma, Mishra, Roberts, Barham, Chung, Sutton, Gehrmann, et~al.]{chowdhery2023palm}
Chowdhery, A., Narang, S., Devlin, J., Bosma, M., Mishra, G., Roberts, A., Barham, P., Chung, H.~W., Sutton, C., Gehrmann, S., et~al.
\newblock Palm: Scaling language modeling with pathways.
\newblock \emph{Journal of Machine Learning Research}, 24\penalty0 (240):\penalty0 1--113, 2023.

\bibitem[Ding et~al.(2023)Ding, Lai, Mok, and Chua]{ding2023computational}
Ding, Y., Lai, Z., Mok, P., and Chua, T.-S.
\newblock Computational technologies for fashion recommendation: A survey.
\newblock \emph{ACM Computing Surveys}, 56\penalty0 (5):\penalty0 1--45, 2023.

\bibitem[Du et~al.(2024)Du, Zeng, Dong, and Tang]{du2024understanding}
Du, Z., Zeng, A., Dong, Y., and Tang, J.
\newblock Understanding emergent abilities of language models from the loss perspective.
\newblock \emph{arXiv preprint arXiv:2403.15796}, 2024.

\bibitem[Fang et~al.(2020)Fang, Zhang, Shu, and Guo]{DBLP:journals/tois/FangZSG20}
Fang, H., Zhang, D., Shu, Y., and Guo, G.
\newblock Deep learning for sequential recommendation: Algorithms, influential factors, and evaluations.
\newblock \emph{{ACM} Trans. Inf. Syst.}, 39\penalty0 (1):\penalty0 10:1--10:42, 2020.
\newblock \doi{10.1145/3426723}.
\newblock URL \url{https://doi.org/10.1145/3426723}.

\bibitem[Gadre et~al.(2024)Gadre, Smyrnis, Shankar, Gururangan, Wortsman, Shao, Mercat, Fang, Li, Keh, et~al.]{gadre2024language}
Gadre, S.~Y., Smyrnis, G., Shankar, V., Gururangan, S., Wortsman, M., Shao, R., Mercat, J., Fang, A., Li, J., Keh, S., et~al.
\newblock Language models scale reliably with over-training and on downstream tasks.
\newblock \emph{arXiv preprint arXiv:2403.08540}, 2024.

\bibitem[Gao et~al.(2022)Gao, Li, Zhang, Chen, Li, Lei, Jiang, and He]{gao2022kuairand}
Gao, C., Li, S., Zhang, Y., Chen, J., Li, B., Lei, W., Jiang, P., and He, X.
\newblock Kuairand: an unbiased sequential recommendation dataset with randomly exposed videos.
\newblock In \emph{Proceedings of the 31st ACM International Conference on Information \& Knowledge Management}, pp.\  3953--3957, 2022.

\bibitem[Gu et~al.(2022)Gu, Meng, Lu, Hou, Minzhe, Liang, Yao, Huang, Zhang, Jiang, et~al.]{gu2022wukong}
Gu, J., Meng, X., Lu, G., Hou, L., Minzhe, N., Liang, X., Yao, L., Huang, R., Zhang, W., Jiang, X., et~al.
\newblock Wukong: A 100 million large-scale chinese cross-modal pre-training benchmark.
\newblock \emph{Advances in Neural Information Processing Systems}, 35:\penalty0 26418--26431, 2022.

\bibitem[Guo et~al.(2021)Guo, Su, Tan, Guo, Zhang, Liu, Tang, and He]{guo2021dual}
Guo, W., Su, R., Tan, R., Guo, H., Zhang, Y., Liu, Z., Tang, R., and He, X.
\newblock Dual graph enhanced embedding neural network for ctr prediction.
\newblock In \emph{Proceedings of the 27th ACM SIGKDD conference on knowledge discovery \& data mining}, pp.\  496--504, 2021.

\bibitem[Guo et~al.(2023)Guo, Meng, Yuan, He, Guo, Zhang, Chen, Hu, Tang, Li, et~al.]{guo2023compressed}
Guo, W., Meng, C., Yuan, E., He, Z., Guo, H., Zhang, Y., Chen, B., Hu, Y., Tang, R., Li, X., et~al.
\newblock Compressed interaction graph based framework for multi-behavior recommendation.
\newblock In \emph{Proceedings of the ACM Web Conference 2023}, pp.\  960--970, 2023.

\bibitem[Han et~al.(2023)Han, Wu, Wang, Wang, Zhang, Li, Lian, and Chen]{han2023guesr}
Han, Y., Wu, L., Wang, H., Wang, G., Zhang, M., Li, Z., Lian, D., and Chen, E.
\newblock Guesr: A global unsupervised data-enhancement with bucket-cluster sampling for sequential recommendation.
\newblock In \emph{International Conference on Database Systems for Advanced Applications}, pp.\  286--296. Springer, 2023.

\bibitem[Harper \& Konstan(2015)Harper and Konstan]{harper2015movielens}
Harper, F.~M. and Konstan, J.~A.
\newblock The movielens datasets: History and context.
\newblock \emph{Acm transactions on interactive intelligent systems (tiis)}, 5\penalty0 (4):\penalty0 1--19, 2015.

\bibitem[He et~al.(2016)He, Zhang, Kan, and Chua]{10.1145/2911451.2911489}
He, X., Zhang, H., Kan, M.-Y., and Chua, T.-S.
\newblock Fast matrix factorization for online recommendation with implicit feedback.
\newblock In \emph{Proceedings of the 39th International ACM SIGIR Conference on Research and Development in Information Retrieval}, SIGIR '16, pp.\  549–558, New York, NY, USA, 2016. Association for Computing Machinery.
\newblock ISBN 9781450340694.
\newblock \doi{10.1145/2911451.2911489}.
\newblock URL \url{https://doi.org/10.1145/2911451.2911489}.

\bibitem[He et~al.(2023)He, Liu, Guo, Qin, Zhang, Hu, and Tang]{he2023survey}
He, Z., Liu, W., Guo, W., Qin, J., Zhang, Y., Hu, Y., and Tang, R.
\newblock A survey on user behavior modeling in recommender systems.
\newblock \emph{arXiv preprint arXiv:2302.11087}, 2023.

\bibitem[Hidasi(2015)]{hidasi2015session}
Hidasi, B.
\newblock Session-based recommendations with recurrent neural networks.
\newblock \emph{arXiv preprint arXiv:1511.06939}, 2015.

\bibitem[Hidasi \& Karatzoglou(2018)Hidasi and Karatzoglou]{10.1145/3269206.3271761}
Hidasi, B. and Karatzoglou, A.
\newblock Recurrent neural networks with top-k gains for session-based recommendations.
\newblock In \emph{Proceedings of the 27th ACM International Conference on Information and Knowledge Management}, CIKM '18, pp.\  843–852, New York, NY, USA, 2018. Association for Computing Machinery.
\newblock ISBN 9781450360142.
\newblock \doi{10.1145/3269206.3271761}.
\newblock URL \url{https://doi.org/10.1145/3269206.3271761}.

\bibitem[Hidasi et~al.(2016)Hidasi, Karatzoglou, Baltrunas, and Tikk]{DBLP:journals/corr/HidasiKBT15}
Hidasi, B., Karatzoglou, A., Baltrunas, L., and Tikk, D.
\newblock Session-based recommendations with recurrent neural networks.
\newblock In Bengio, Y. and LeCun, Y. (eds.), \emph{4th International Conference on Learning Representations, {ICLR} 2016, San Juan, Puerto Rico, May 2-4, 2016, Conference Track Proceedings}, 2016.
\newblock URL \url{http://arxiv.org/abs/1511.06939}.

\bibitem[Hoffmann et~al.(2022{\natexlab{a}})Hoffmann, Borgeaud, Mensch, Buchatskaya, Cai, Rutherford, Casas, Hendricks, Welbl, Clark, et~al.]{hoffmann2022training}
Hoffmann, J., Borgeaud, S., Mensch, A., Buchatskaya, E., Cai, T., Rutherford, E., Casas, D. d.~L., Hendricks, L.~A., Welbl, J., Clark, A., et~al.
\newblock Training compute-optimal large language models.
\newblock \emph{arXiv preprint arXiv:2203.15556}, 2022{\natexlab{a}}.

\bibitem[Hoffmann et~al.(2022{\natexlab{b}})Hoffmann, Borgeaud, Mensch, Buchatskaya, Cai, Rutherford, de~Las~Casas, Hendricks, Welbl, Clark, et~al.]{hoffmann2022empirical}
Hoffmann, J., Borgeaud, S., Mensch, A., Buchatskaya, E., Cai, T., Rutherford, E., de~Las~Casas, D., Hendricks, L.~A., Welbl, J., Clark, A., et~al.
\newblock An empirical analysis of compute-optimal large language model training.
\newblock \emph{Advances in Neural Information Processing Systems}, 35:\penalty0 30016--30030, 2022{\natexlab{b}}.

\bibitem[Jiang et~al.(2023)Jiang, Sablayrolles, Mensch, Bamford, Chaplot, Casas, Bressand, Lengyel, Lample, Saulnier, et~al.]{jiang2023mistral}
Jiang, A.~Q., Sablayrolles, A., Mensch, A., Bamford, C., Chaplot, D.~S., Casas, D. d.~l., Bressand, F., Lengyel, G., Lample, G., Saulnier, L., et~al.
\newblock Mistral 7b.
\newblock \emph{arXiv preprint arXiv:2310.06825}, 2023.

\bibitem[Kang \& McAuley(2018)Kang and McAuley]{8594844}
Kang, W.-C. and McAuley, J.
\newblock Self-attentive sequential recommendation.
\newblock In \emph{2018 IEEE International Conference on Data Mining (ICDM)}, pp.\  197--206, 2018.
\newblock \doi{10.1109/ICDM.2018.00035}.

\bibitem[Kaplan et~al.(2020)Kaplan, McCandlish, Henighan, Brown, Chess, Child, Gray, Radford, Wu, and Amodei]{kaplan2020scaling}
Kaplan, J., McCandlish, S., Henighan, T., Brown, T.~B., Chess, B., Child, R., Gray, S., Radford, A., Wu, J., and Amodei, D.
\newblock Scaling laws for neural language models.
\newblock \emph{arXiv preprint arXiv:2001.08361}, 2020.

\bibitem[Khandelwal et~al.(2019)Khandelwal, Levy, Jurafsky, Zettlemoyer, and Lewis]{khandelwal2019generalization}
Khandelwal, U., Levy, O., Jurafsky, D., Zettlemoyer, L., and Lewis, M.
\newblock Generalization through memorization: Nearest neighbor language models.
\newblock \emph{arXiv preprint arXiv:1911.00172}, 2019.

\bibitem[Kraft(1949)]{kraft1949device}
Kraft, L.~G.
\newblock \emph{A device for quantizing, grouping, and coding amplitude-modulated pulses}.
\newblock PhD thesis, Massachusetts Institute of Technology, 1949.

\bibitem[Lin et~al.(2022)Lin, Feng, Ardestani, Lee, Lundell, Kim, Kejariwal, and Owens]{lin2022building}
Lin, Z., Feng, L., Ardestani, E.~K., Lee, J., Lundell, J., Kim, C., Kejariwal, A., and Owens, J.~D.
\newblock Building a performance model for deep learning recommendation model training on gpus.
\newblock In \emph{2022 IEEE 29th International Conference on High Performance Computing, Data, and Analytics (HiPC)}, pp.\  48--58. IEEE, 2022.

\bibitem[Luo et~al.(2024)Luo, Yao, He, Huang, Zhou, Zhang, Xiao, Zhan, and Song]{luo2024integrating}
Luo, S., Yao, Y., He, B., Huang, Y., Zhou, A., Zhang, X., Xiao, Y., Zhan, M., and Song, L.
\newblock Integrating large language models into recommendation via mutual augmentation and adaptive aggregation.
\newblock \emph{arXiv preprint arXiv:2401.13870}, 2024.

\bibitem[McAuley \& Leskovec(2013)McAuley and Leskovec]{mcauley2013hidden}
McAuley, J. and Leskovec, J.
\newblock Hidden factors and hidden topics: understanding rating dimensions with review text.
\newblock In \emph{Proceedings of the 7th ACM conference on Recommender systems}, pp.\  165--172, 2013.

\bibitem[Muennighoff et~al.(2023)Muennighoff, Rush, Barak, Le~Scao, Tazi, Piktus, Pyysalo, Wolf, and Raffel]{muennighoff2023scaling}
Muennighoff, N., Rush, A., Barak, B., Le~Scao, T., Tazi, N., Piktus, A., Pyysalo, S., Wolf, T., and Raffel, C.~A.
\newblock Scaling data-constrained language models.
\newblock \emph{Advances in Neural Information Processing Systems}, 36:\penalty0 50358--50376, 2023.

\bibitem[Nakkiran et~al.(2021)Nakkiran, Kaplun, Bansal, Yang, Barak, and Sutskever]{nakkiran2021deep}
Nakkiran, P., Kaplun, G., Bansal, Y., Yang, T., Barak, B., and Sutskever, I.
\newblock Deep double descent: Where bigger models and more data hurt.
\newblock \emph{Journal of Statistical Mechanics: Theory and Experiment}, 2021\penalty0 (12):\penalty0 124003, 2021.

\bibitem[Niu et~al.(2024)Niu, Bai, Deng, and Han]{niu2024beyond}
Niu, X., Bai, B., Deng, L., and Han, W.
\newblock Beyond scaling laws: Understanding transformer performance with associative memory.
\newblock \emph{arXiv preprint arXiv:2405.08707}, 2024.

\bibitem[Pincus(1991)]{pincus1991approximate}
Pincus, S.~M.
\newblock Approximate entropy as a measure of system complexity.
\newblock \emph{Proceedings of the national academy of sciences}, 88\penalty0 (6):\penalty0 2297--2301, 1991.

\bibitem[Power et~al.(2022)Power, Burda, Edwards, Babuschkin, and Misra]{power2022grokking}
Power, A., Burda, Y., Edwards, H., Babuschkin, I., and Misra, V.
\newblock Grokking: Generalization beyond overfitting on small algorithmic datasets.
\newblock \emph{arXiv preprint arXiv:2201.02177}, 2022.

\bibitem[Rae et~al.(2021)Rae, Borgeaud, Cai, Millican, Hoffmann, Song, Aslanides, Henderson, Ring, Young, et~al.]{rae2021scaling}
Rae, J.~W., Borgeaud, S., Cai, T., Millican, K., Hoffmann, J., Song, F., Aslanides, J., Henderson, S., Ring, R., Young, S., et~al.
\newblock Scaling language models: Methods, analysis \& insights from training gopher.
\newblock \emph{arXiv preprint arXiv:2112.11446}, 2021.

\bibitem[Sayood(2017)]{sayood2017introduction}
Sayood, K.
\newblock \emph{Introduction to data compression}.
\newblock Morgan Kaufmann, 2017.

\bibitem[Shen et~al.(2024)Shen, Wang, Zhang, Zhao, Li, Chen, Lian, and Chen]{shen2024exploring}
Shen, T., Wang, H., Zhang, J., Zhao, S., Li, L., Chen, Z., Lian, D., and Chen, E.
\newblock Exploring user retrieval integration towards large language models for cross-domain sequential recommendation.
\newblock \emph{arXiv preprint arXiv:2406.03085}, 2024.

\bibitem[Sun et~al.(2019)Sun, Liu, Wu, Pei, Lin, Ou, and Jiang]{10.1145/3357384.3357895}
Sun, F., Liu, J., Wu, J., Pei, C., Lin, X., Ou, W., and Jiang, P.
\newblock Bert4rec: Sequential recommendation with bidirectional encoder representations from transformer.
\newblock In \emph{Proceedings of the 28th ACM International Conference on Information and Knowledge Management}, CIKM '19, pp.\  1441–1450, New York, NY, USA, 2019. Association for Computing Machinery.
\newblock ISBN 9781450369763.
\newblock \doi{10.1145/3357384.3357895}.
\newblock URL \url{https://doi.org/10.1145/3357384.3357895}.

\bibitem[Tang \& Wang(2018{\natexlab{a}})Tang and Wang]{10.1145/3159652.3159656}
Tang, J. and Wang, K.
\newblock Personalized top-n sequential recommendation via convolutional sequence embedding.
\newblock In \emph{Proceedings of the Eleventh ACM International Conference on Web Search and Data Mining}, WSDM '18, pp.\  565–573, New York, NY, USA, 2018{\natexlab{a}}. Association for Computing Machinery.
\newblock ISBN 9781450355810.
\newblock \doi{10.1145/3159652.3159656}.
\newblock URL \url{https://doi.org/10.1145/3159652.3159656}.

\bibitem[Tang \& Wang(2018{\natexlab{b}})Tang and Wang]{tang2018personalized}
Tang, J. and Wang, K.
\newblock Personalized top-n sequential recommendation via convolutional sequence embedding.
\newblock In \emph{Proceedings of the eleventh ACM international conference on web search and data mining}, pp.\  565--573, 2018{\natexlab{b}}.

\bibitem[Team et~al.(2024)Team, Mesnard, Hardin, Dadashi, Bhupatiraju, Pathak, Sifre, Rivi{\`e}re, Kale, Love, et~al.]{team2024gemma}
Team, G., Mesnard, T., Hardin, C., Dadashi, R., Bhupatiraju, S., Pathak, S., Sifre, L., Rivi{\`e}re, M., Kale, M.~S., Love, J., et~al.
\newblock Gemma: Open models based on gemini research and technology.
\newblock \emph{arXiv preprint arXiv:2403.08295}, 2024.

\bibitem[Tong et~al.(2024)Tong, Yin, Wang, Pan, Lian, and Chen]{tong2024mdap}
Tong, J., Yin, M., Wang, H., Pan, Q., Lian, D., and Chen, E.
\newblock Mdap: A multi-view disentangled and adaptive preference learning framework for cross-domain recommendation.
\newblock \emph{arXiv preprint arXiv:2410.05877}, 2024.

\bibitem[Touvron et~al.(2023)Touvron, Martin, Stone, Albert, Almahairi, Babaei, Bashlykov, Batra, Bhargava, Bhosale, et~al.]{touvron2023llama}
Touvron, H., Martin, L., Stone, K., Albert, P., Almahairi, A., Babaei, Y., Bashlykov, N., Batra, S., Bhargava, P., Bhosale, S., et~al.
\newblock Llama 2: Open foundation and fine-tuned chat models.
\newblock \emph{arXiv preprint arXiv:2307.09288}, 2023.

\bibitem[Wang et~al.()Wang, Yin, Zhang, Zhao, and Chen]{wangmf}
Wang, H., Yin, M., Zhang, L., Zhao, S., and Chen, E.
\newblock Mf-gslae: A multi-factor user representation pre-training framework for dual-target cross-domain recommendation.
\newblock \emph{ACM Transactions on Information Systems}.

\bibitem[Wang et~al.(2019)Wang, Xu, Liu, Lian, Chen, Du, Wu, and Su]{wang2019mcne}
Wang, H., Xu, T., Liu, Q., Lian, D., Chen, E., Du, D., Wu, H., and Su, W.
\newblock Mcne: An end-to-end framework for learning multiple conditional network representations of social network.
\newblock In \emph{Proceedings of the 25th ACM SIGKDD international conference on knowledge discovery \& data mining}, pp.\  1064--1072, 2019.

\bibitem[Wang et~al.(2021{\natexlab{a}})Wang, Lian, Tong, Liu, Huang, and Chen]{wang2021decoupled}
Wang, H., Lian, D., Tong, H., Liu, Q., Huang, Z., and Chen, E.
\newblock Decoupled representation learning for attributed networks.
\newblock \emph{IEEE Transactions on Knowledge and Data Engineering}, 35\penalty0 (3):\penalty0 2430--2444, 2021{\natexlab{a}}.

\bibitem[Wang et~al.(2021{\natexlab{b}})Wang, Lian, Tong, Liu, Huang, and Chen]{wang2021hypersorec}
Wang, H., Lian, D., Tong, H., Liu, Q., Huang, Z., and Chen, E.
\newblock Hypersorec: Exploiting hyperbolic user and item representations with multiple aspects for social-aware recommendation.
\newblock \emph{ACM Transactions on Information Systems (TOIS)}, 40\penalty0 (2):\penalty0 1--28, 2021{\natexlab{b}}.

\bibitem[Wang et~al.(2024)Wang, Han, Wang, Cheng, Wang, Guo, Liu, Lian, and Chen]{wang2024denoising}
Wang, H., Han, Y., Wang, K., Cheng, K., Wang, Z., Guo, W., Liu, Y., Lian, D., and Chen, E.
\newblock Denoising pre-training and customized prompt learning for efficient multi-behavior sequential recommendation.
\newblock \emph{arXiv preprint arXiv:2408.11372}, 2024.

\bibitem[Wang et~al.(2020)Wang, Wei, Cong, Li, Mao, and Qiu]{10.1145/3397271.3401142}
Wang, Z., Wei, W., Cong, G., Li, X.-L., Mao, X.-L., and Qiu, M.
\newblock Global context enhanced graph neural networks for session-based recommendation.
\newblock In \emph{Proceedings of the 43rd International ACM SIGIR Conference on Research and Development in Information Retrieval}, SIGIR '20, pp.\  169–178, New York, NY, USA, 2020. Association for Computing Machinery.
\newblock ISBN 9781450380164.
\newblock \doi{10.1145/3397271.3401142}.
\newblock URL \url{https://doi.org/10.1145/3397271.3401142}.

\bibitem[Wu \& Tang(2024)Wu and Tang]{wu2024performance}
Wu, C. and Tang, R.
\newblock Performance law of large language models.
\newblock \emph{arXiv preprint arXiv:2408.09895}, 2024.

\bibitem[Wu et~al.(2024)Wu, Zheng, Qiu, Wang, Gu, Shen, Qin, Zhu, Zhu, Liu, et~al.]{wu2024survey}
Wu, L., Zheng, Z., Qiu, Z., Wang, H., Gu, H., Shen, T., Qin, C., Zhu, C., Zhu, H., Liu, Q., et~al.
\newblock A survey on large language models for recommendation.
\newblock \emph{World Wide Web}, 27\penalty0 (5):\penalty0 60, 2024.

\bibitem[Wu et~al.(2018)Wu, Tang, Zhu, Wang, Xie, and Tan]{DBLP:journals/corr/abs-1811-00855}
Wu, S., Tang, Y., Zhu, Y., Wang, L., Xie, X., and Tan, T.
\newblock Session-based recommendation with graph neural networks.
\newblock \emph{CoRR}, abs/1811.00855, 2018.
\newblock URL \url{http://arxiv.org/abs/1811.00855}.

\bibitem[Wu et~al.(2019)Wu, Tang, Zhu, Wang, Xie, and Tan]{wu2019session}
Wu, S., Tang, Y., Zhu, Y., Wang, L., Xie, X., and Tan, T.
\newblock Session-based recommendation with graph neural networks.
\newblock In \emph{Proceedings of the AAAI conference on artificial intelligence}, volume~33, pp.\  346--353, 2019.

\bibitem[Xie et~al.(2024{\natexlab{a}})Xie, Wang, Zhang, Zhou, Lian, and Chen]{xie2024breaking}
Xie, W., Wang, H., Zhang, L., Zhou, R., Lian, D., and Chen, E.
\newblock Breaking determinism: Fuzzy modeling of sequential recommendation using discrete state space diffusion model.
\newblock \emph{arXiv preprint arXiv:2410.23994}, 2024{\natexlab{a}}.

\bibitem[Xie et~al.(2024{\natexlab{b}})Xie, Zhou, Wang, Shen, and Chen]{xie2024bridging}
Xie, W., Zhou, R., Wang, H., Shen, T., and Chen, E.
\newblock Bridging user dynamics: Transforming sequential recommendations with schr{\"o}dinger bridge and diffusion models.
\newblock In \emph{Proceedings of the 33rd ACM International Conference on Information and Knowledge Management}, pp.\  2618--2628, 2024{\natexlab{b}}.

\bibitem[Yin et~al.(2024{\natexlab{a}})Yin, Wang, Guo, Liu, Zhang, Zhao, Lian, and Chen]{yin2024dataset}
Yin, M., Wang, H., Guo, W., Liu, Y., Zhang, S., Zhao, S., Lian, D., and Chen, E.
\newblock Dataset regeneration for sequential recommendation.
\newblock In \emph{Proceedings of the 30th ACM SIGKDD Conference on Knowledge Discovery and Data Mining}, pp.\  3954--3965, 2024{\natexlab{a}}.

\bibitem[Yin et~al.(2024{\natexlab{b}})Yin, Wu, Wang, Wang, Guo, Wang, Liu, Tang, Lian, and Chen]{yin2024entropy}
Yin, M., Wu, C., Wang, Y., Wang, H., Guo, W., Wang, Y., Liu, Y., Tang, R., Lian, D., and Chen, E.
\newblock Entropy law: The story behind data compression and llm performance.
\newblock \emph{arXiv preprint arXiv:2407.06645}, 2024{\natexlab{b}}.

\bibitem[Zhai et~al.(2024)Zhai, Liao, Liu, Wang, Li, Cao, Gao, Gong, Gu, He, et~al.]{zhai2024actions}
Zhai, J., Liao, L., Liu, X., Wang, Y., Li, R., Cao, X., Gao, L., Gong, Z., Gu, F., He, M., et~al.
\newblock Actions speak louder than words: Trillion-parameter sequential transducers for generative recommendations.
\newblock \emph{arXiv preprint arXiv:2402.17152}, 2024.

\bibitem[Zhang et~al.(2024)Zhang, Wang, Zhang, Yin, Han, Zhang, Lian, and Chen]{zhang2024unified}
Zhang, L., Wang, H., Zhang, S., Yin, M., Han, Y., Zhang, J., Lian, D., and Chen, E.
\newblock A unified framework for adaptive representation enhancement and inversed learning in cross-domain recommendation.
\newblock \emph{arXiv preprint arXiv:2404.00268}, 2024.

\bibitem[Zhou et~al.(2022)Zhou, Yu, Zhao, and Wen]{zhou2022filter}
Zhou, K., Yu, H., Zhao, W.~X., and Wen, J.-R.
\newblock Filter-enhanced mlp is all you need for sequential recommendation.
\newblock In \emph{Proceedings of the ACM web conference 2022}, pp.\  2388--2399, 2022.

\end{thebibliography}
\bibliographystyle{icml2024}

%%%%%%%%%%%%%%%%%%%%%%%%%%%%%%%%%%%%%%%%%%%%%%%%%%%%%%%%%%%%%%%%%%%%%%%%%%%%%%%
%%%%%%%%%%%%%%%%%%%%%%%%%%%%%%%%%%%%%%%%%%%%%%%%%%%%%%%%%%%%%%%%%%%%%%%%%%%%%%%
% APPENDIX
%%%%%%%%%%%%%%%%%%%%%%%%%%%%%%%%%%%%%%%%%%%%%%%%%%%%%%%%%%%%%%%%%%%%%%%%%%%%%%%
%%%%%%%%%%%%%%%%%%%%%%%%%%%%%%%%%%%%%%%%%%%%%%%%%%%%%%%%%%%%%%%%%%%%%%%%%%%%%%%

%\newpage
%\appendix
%\onecolumn

%\section{You \emph{can} have an appendix here.}

%You can have as much text here as you want. The main body must be at most $8$ pages long.
%For the final version, one more page can be added. If you want, you can use an appendix like this one.  

%The $\mathtt{\backslash onecolumn}$ command above can be kept in place if you prefer a one-column appendix, or can be removed if you prefer a two-column appendix.  Apart from this possible change, the style (font size, spacing, margins, page numbering, etc.) should be kept the same as the main body.
%%%%%%%%%%%%%%%%%%%%%%%%%%%%%%%%%%%%%%%%%%%%%%%%%%%%%%%%%%%%%%%%%%%%%%%%%%%%%%%
%%%%%%%%%%%%%%%%%%%%%%%%%%%%%%%%%%%%%%%%%%%%%%%%%%%%%%%%%%%%%%%%%%%%%%%%%%%%%%%

\end{document}